\definecolor{pink}{HTML}{f77189}
\definecolor{orange}{HTML}{ce9032}
\definecolor{vert}{HTML}{32b166}
\definecolor{turquoise}{HTML}{36ada4}
\definecolor{blue}{HTML}{39a7d0}
\definecolor{olive}{HTML}{97a431}
\def\11{{\mathbf 1}}    
\def\cA{{\mathcal A}}  \def\cG{{\mathcal G}} \def\cM{{\mathcal M}}       \def\cC{{\mathcal C}}             \def\cQ{{\mathcal Q}}  \def\cF{{\mathcal F}}    \def\cX{{\mathcal X}} \def\cY{{\mathcal Y}}  
\def\bV{{\mathbf V}}  \def\bU{{\mathbf U}} 
\def\bA{{\mathbf A}}  \def\ba{{\mathbf a}} 
  \def\bp{{\mathbf p}} 
\newlist{mylist}{itemize}{1}
\setlist[mylist]{label=\textbf{Task:}}
\newtheorem{proposition}{Proposition}
\newtheorem{definition}{Definition}
\tikzstyle{fullspace}=[draw=blue, ellipse, fill=blue!30, ultra thick, minimum height=6em, minimum width=12em]
\tikzstyle{subspace}=[draw=pink, ellipse, fill=pink!30, ultra thick, minimum height=4em, minimum width=6em]
\tikzstyle{point}=[draw, circle, fill=black, inner sep=0.pt]
\tikzstyle{arrow}=[-Stealth, gray!30!black, line width=0.8]
\tikzstyle{state}=[circle, draw, thick, minimum size=7mm, inner sep=0.5mm]
\colorlet{standard_edge_color}{black}
\colorlet{standard_node_color}{white}
\colorlet{selection_edge_color}{blue!80!black}
\colorlet{selection_node_color}{blue!10}
\colorlet{policy_edge_color}{green!80!black}
\colorlet{policy_node_color}{green!10}
\colorlet{proxy_edge_color}{pink!80!black}
\colorlet{proxy_node_color}{pink!10}
\colorlet{subspace_color}{pink!80!black}
\colorlet{punct}{red!60!black}
\definecolor{background}{HTML}{EEEEEE}
\definecolor{delim}{RGB}{20,105,176}
\colorlet{numb}{magenta!60!black}
\lstdefinelanguage{json}{
    basicstyle=\normalfont\ttfamily,
    stepnumber=1,
    numbersep=8pt,
    showstringspaces=false,
    breaklines=true,
    frame=lines,
    literate=
     *{0}{{{\color{numb}0}}}{1}
      {1}{{{\color{numb}1}}}{1}
      {2}{{{\color{numb}2}}}{1}
      {3}{{{\color{numb}3}}}{1}
      {4}{{{\color{numb}4}}}{1}
      {5}{{{\color{numb}5}}}{1}
      {6}{{{\color{numb}6}}}{1}
      {7}{{{\color{numb}7}}}{1}
      {8}{{{\color{numb}8}}}{1}
      {9}{{{\color{numb}9}}}{1}
      {:}{{{\color{punct}{:}}}}{1}
      {,}{{{\color{punct}{,}}}}{1}
      {\{}{{{\color{delim}{\{}}}}{1}
      {\}}{{{\color{delim}{\}}}}}{1}
      {[}{{{\color{delim}{[}}}}{1}
      {]}{{{\color{delim}{]}}}}{1},
}
\newcommand{\rulesep}{\unskip\ \vrule\ }
\newlength{\verticalsep}
\newlength{\horizontalsep}
\title{The Fragility of Fairness:\\Causal Sensitivity Analysis for Fair Machine Learning}
\author{%
  Jake Fawkes$^*$ \\
  Department of Statistics\\
  University of Oxford\\
  \texttt{jake.fawkes@st-hughs.ox.ac.uk} \\
  \And
  Nic Fishman$^*$ \\
  Department of Statistics \\
  Harvard University \\
  \texttt{njwfish@gmail.com}
\AND
  Mel Andrews \\
  Philosophy Department \\
  University of Cincinnati
\And
  Zachary C. Lipton \\
  Machine Learning Department \\
  Carnegie Mellon University \\
}
\begin{document}

\maketitle

\begin{abstract}
    Fairness metrics are a core tool in the fair machine learning literature (FairML), used to determine that ML models are, in some sense, “fair.”  Real-world data, however, are typically plagued by various measurement biases and other violated assumptions, which can render fairness assessments meaningless. We adapt tools from causal sensitivity analysis to the FairML context, providing a general framework which (1) accommodates effectively any combination of fairness metric and bias that can be posed in the ``oblivious setting''; (2) allows researchers to investigate combinations of biases, resulting in non-linear sensitivity; and (3) enables flexible encoding of domain-specific constraints and assumptions. Employing this framework, we analyze the sensitivity of the most common parity metrics under 3 varieties of classifier across 14 canonical fairness datasets. Our analysis reveals the striking fragility of fairness assessments to even minor dataset biases. We show that causal sensitivity analysis provides a powerful and necessary toolkit for gauging the informativeness of parity metric evaluations. Our repository is \href{https://github.com/Jakefawkes/fragile_fair}{available here}.
\end{abstract}

\section{Introduction}

Fair machine learning (FairML) is a theoretical approach to studying and remediating disparities in prediction and allocation systems based on machine learning algorithms. A core focus of the field has been to develop, evaluate, and train models to satisfy a number of ``fairness metrics''. These metrics operationalize the social ideal of fairness as a statistical quantification of some performance measure compared across demographic groups. Such evaluations often play an important role in auditing ML systems~\citep{buolamwini2018gender,raji2019actionable,mitchell2019model} to certify whether models satisfy some tolerable level of statistical disparity.

Real-world data, however, is frequently plagued by a variety of measurement biases and other violated
assumptions which can undermine the validity of fairness metrics\citep{bao2021s,jacobs2021measurement}. 
While such biases come in many forms, in this work we focus on the following: noisy or poorly-defined outcome measures (proxy bias)~\citep{fogliato2020fairness}, the observation of samples or outcomes from only a subset of the population (selection bias)~\citep{bareinboim2022recovering, kallus2018residual}, or causal impacts on outcomes through background policies within a firm's control~\citep{coston2020counterfactual}, which we term \textit{extra-classificatory policies}, or ECPs. We focus on these varieties of bias due to their ubiquity in FairML applications, which we demonstrate through an analysis of their prevalence and magnitude in a range of benchmark datasets (see Table \ref{tab:cross_dataset_main} and App. \ref{ap:cross_dataset_analysis}).

Motivated by the problems posed by such measurement biases, we offer a framework based on graphical causal inference to operationalize assumptions about data quality issues, alongside methods adapted from causal sensitivity analysis for statistical quantification of their impacts on fairness evaluations. This framework enables both ML practitioners and auditors to empirically gauge the sensitivity of parity metrics to assumption violations for specific combinations of metrics, datasets, and use cases. Causal inference is particularly apt for this problem, as it provides a formal language within which to precisely identify the goals of a particular study: what is the quantity we seek to estimate, and in which population? This accounts for the success of causal inference in the social sciences and makes it similarly well-suited for use in the arsenal of ML auditing tools. 

We leverage recent developments in automated discrete causal inference, particularly the autobounds framework of \citet{duarte2023automated}, to provide a unified causal sensitivity analysis framework for the ``oblivious'' setting, as laid out in \citet{hardt2016equality}. In this setting, we only have access to protected attributes $A$, the true target labels $Y$, and the predicted labels $\hat{Y}$, but not to covariates $X$. For example, in evaluating racial discrimination in loan granting, one has access to the race attribute $A$, the true repayment rate $Y$, and the predictions $\hat{Y}$, but not to input features $X$ nor the form of $\hat{Y}(X)$. 

This lends us a straightforward procedure for performing sensitivity analyses for any combination of measurement bias and suitably well-behaved metric that can be posed obliviously: (i) Express the bias in terms of a causal graph---a directed acyclic graph, henceforth DAG, (ii) choose a sensitivity parameter to control the degree of bias, (iii) provide any additional probabilistic assumptions or relevant structural knowledge. The problem of bounding a statistic under a given degree of bias can then be converted to solving a given constrained optimization problem \cite{duarte2023automated}, which is achieved via a branch and bound solver ~\cite{belotti2009branching}, leading to valid bounds even when a global optimum is not reached. 

We apply this framework to systematically explore the sensitivity of different metrics to the three biases---proxy label bias, selection bias, and extra-classificatory policy bias---for different datasets and classifiers. Our results reveal that many fairness metrics are in fact \textit{fragile}: realistic violations of core underlying assumptions can imply vacuously wide sensitivity bounds. In other words, features of typical deployment contexts can easily render fairness evaluations useless or uninformative. 

The fragility of well-known fairness metrics to pervasive biases represents one key empirical finding. Our second core result demonstrates the existence of tradeoffs between the complexity and fragility of fairness metrics. The robustness of parity notions scales inversely with their dependence on predictive outcomes and the intricacy of this dependence function. We find demographic parity to be most robust to measurement biases, while predictive parity metrics exhibit the most fragility to bias. In light of known incommensurability results \cite{kleinberg2016inherent}, we urge the importance of understanding these tradeoffs and their practical implications, for both practitioners and auditors alike. 

With these experiments, we aim to demonstrate that the biases we describe are an unavoidable aspect of the FairML problem, not a mere addendum. As such, we have hopes that our unified sensitivity analysis framework can enable both auditors and practitioners to understand how robust their “fairness” evaluations are to various measurement biases by precisely articulating the quantity they wish to evaluate and its divergence from what has been measured. Finally, we hope this work will inspire greater emphasis going forward on the realities of real-world deployment scenarios, such as measurement biases, and their impacts on fairness evaluations.

\section{Related work}

\paragraph{Proxy Label Bias} Proxy Label bias is a foundational problem in FairML, endemic within criminal justice and legal applications of ML, which the fairness literature originally arose to address \cite{bao2021s,fogliato2024estimating}. \citet{fogliato2020fairness} presents one of the earliest considerations of sensitivity analysis within FairML, algebraically deriving sensitivity bounds for proxy label bias. As we demonstrate in \ref{sec:experiments}, our approach is capable of re-deriving and extending these results. In a similar vein, \citet{adebayo2023quantifying} studied the effects of label noise on fairness metric evaluation, although this work was largely empirical. \citet{guerdan2023ground} propose several causal models for reasoning about proxy labels in human—algorithm joint decision making, which can be rendered compatible with our sensitivity analysis framework. Further work has explored alternate aspects of fairness evaluations under proxy label bias \cite{wang2021fair, wu2022fair}.

\paragraph{Selection Bias}

Selection bias was first considered in FairML as ``selective labels'' by \citet{lakkaraju2017selective}, focused on the scenario in which a policy determines which outcomes are observed. \citet{kallus2018residual} study the effects such a biased policy can have on equalized odds for the unselected population when predictors are trained only on the selected population. Various works now link selection bias in fairness to causal inference~\cite{fawkes2022selection,schulam2017reliable} with \citet{goel2021importance} providing a summary of the different types of selection in terms of causal graphs. \citet{coston2021characterizing} take an importance-weighting approach to train FairML classifiers with selective labels, under assumptions on the structure of the missingness. \citet{zhang2021assessing} study how to assess the accuracy parity in unselected data, from the selected data, under assumptions on selection structure and the FairML model class.

\paragraph{Extra Classificatory Policy Bias} We investigate the impacts of extra classificatory policies, that is, policies under the control of the predicting agent which causally affect the outcome of interest. This work is related to counterfactual risk assessments \cite{coston2020counterfactual}, and subsequent work on counterfactual equalized odds \citep{mishler2021fairness}. Sensitivity analysis approaches have further been developed for unmeasured confounding \citep{byun2024auditing, rambachan2022counterfactual}. Our methodology differs from these works in our focus on the oblivious setting. As such, we focus less on identification, but rather on how influential a policy would need to be before it could significantly impact a fairness evaluation. 

\paragraph{Additional Related Work} Beyond the above work on sensitivity analysis there are other general approaches to understanding the robustness of algorithmic fairness to data bias~\cite{li2022data}, such as adversarial robustness \cite{chang2020adversarial,konstantinov2022fairness} and distributional robustness \cite{taskesen2020distributionally}. These are very flexible in terms of the types of bias they can represent, however, this renders them less interpretable and less able to incorporate additional assumptions. One measurement bias we did not consider is proxy attribute bias \citep{chen2019fairness}, for which there exist quite comprehensive sensitivity analysis results \cite{kallus2022assessing}. There is also a selection of work performing sensitivity analysis for unmeasured confounding in causal fairness \cite{kilbertus2020sensitivity,schroder2023causal,wu2019counterfactual}

\section{Measurement Biases}\label{sec:meas_bias}

This section introduces the measurement biases we consider via two reccurring examples, demonstrating how they arise in the wild and emphasizing the role of practitioner choice. We first deliver a conceptual illustration of said biases (3.1-3.3) before presenting an empirical analysis of their prevalence across a range of FairML benchmark datasets \cite{le2022survey}. 

\subsection{Proxy Label Bias}
We first discuss proxy label bias, introducing it via the following example:

\begin{tcolorbox}[boxsep=-0.25em]
\textbf{An Algorithmic Hiring System:}
\textit{Suppose that a company receives thousands of applications for every job they advertise. To handle this, they elect to build an ML-based system to assist in sifting through candidates. They opt to construct a model for predicting employees’ performance review scores from their resumés, which is then used to assign scores to applicants based on predicted performance reviews. Finally, to ensure that the model is fair, they check against standard fairness metrics on a held-out subset of the training data.}
\end{tcolorbox}

The company described has taken steps to ensure that its job candidate filtering model is “fair.” However, it has only run parity tests relative to the variable chosen as its target of prediction. The latent variable of interest in this scenario, what the firm ideally strives to predict, is “employee quality.” However, “employee quality” is multifaceted and socially constructed; it is not a phenomenon that can be directly and objectively measured.\footnote{~For discussion of how mismatches between unobservable latents and their proxies play in fair ML, see~\cite{jacobs2021measurement}.}. Instead, engineers leverage a \textit{proxy label}. Unlike the nebulous property of employee quality, employee performance reviews are readily available. It is facially not unreasonable to take performance reviews to stand in for employee quality; after all, the one exists, at least putatively, to track the other. However, there is a problem with this strategy: It has been extensively documented that performance reviews are often discriminatory---in other words, performance reviews are both a worse signal of the true underlying latent for certain demographics, and are skewed negative for those demographics relative to the true latent~\cite{dobbin2015rage, rivera2019scaling}. 

If an outcome is biased, then classifiers optimizing predictive accuracy on a proxy can appear to satisfy a fairness metric when, in reality, the metric has inherited the biases of the outcome. Practitioners must understand whether a particular outcome is well-suited to the underlying decision problem, alongside any skew or measurement bias that may be induced by using such a measure. Often the use of a proxy outcome is unavoidable. If so, sensitivity analysis is a key tool for understanding what impact biases in the proxy could have on parity metric evaluations.

\subsection{Selection Bias:}
Introducing our second example to discuss selection bias:
\begin{tcolorbox}[boxsep=-0.25em]
\textbf{An Automated Loan-Approval Algorithm:}
\textit{A large bank has an online lending platform that receives thousands of loan applications per day, the vast majority of them for under USD \$1,000. The bank cannot afford to assign employees to assess all the applications, and elects to automate the process. The bank uses its data on loan repayment to fit a model for likelihood of repayment, with loans automatically approved if the estimated probability of repayment sits above some threshold. The model is once again assessed via standard fairness metrics on holdout data.}
\end{tcolorbox}

As in the previous example, the bank only possesses repayment information for the population that has historically been approved for loans; the subset of the broader population that was not granted a loan is therefore unobserved. Each firm trains a classifier on the subpopulation for which it possesses outcome data. The classifiers’ performance on the entire population of job applicants and loan applicants, respectively, are unknown. Further, fairness guarantees on only selected populations can fail to meaningfully extrapolate to deployed models, especially when historical selection procedures encode biases. This phenomenon has been referred to as \textit{selective labels} \cite{lakkaraju2017selective} or \textit{prejudiced data} \cite{kallus2018residual}, and now more commonly under the catch-all term of \textit{selection bias} \cite{goel2021importance}.

While selection bias covers the selective labels case, it encompasses a broader class of examples. An important question in the evaluation of fairness metrics is what population the practitioner would ideally want to evaluate the statistic in. We refer to this as the \textit{reference population}. At bare minimum, the reference population should be the population the model is deployed on, not the training population---as the selective labels literature points out. However, there are situations where arguments could be made for broader reference populations. For example, should fairness metrics for an algorithmic hiring system be assessed on the local pool of applicants to that company, or the global pool of applicants to similar roles? \citet{dai2021fair} demonstrate that, with reputational dynamics in play, applicants may strategically apply to firms based on their chances of success. A firm’s hiring practices can therefore satisfy fairness desiderata relative to their applicant pool having radically skewed the demographics of that population via a history of discriminatory hiring policies. Put crudely: the firm that “women candidates know not to apply to” is not non-discriminatory, but an evaluation of hiree demographics relative to the applicant pool might deceptively depict it as such.

There are no universalisable solutions to such issues, as the choice of reference population must depend on precisely what task the model is being trained to perform, and where it will ultimately be deployed. This further points to a need for practitioners to be clear about what population they have chosen, how it may differ from the data they have measured, and why this specific reference population is preferable to others for the task at hand.

\subsection{Extra-Classificatory Policy Bias}

A third issue emerges when considering additional context in the loan-approval case study:
\begin{tcolorbox}[boxsep=-0.25em]
\textbf{Automated Loan-Approval (cont):}
\textit{Our bank now has a model for loan approval. However, they must also set interest rates for approved loans. For this, they employ a legacy model, which works off of a number of factors including credit score, loan amount, lendee address, and various macroeconomic variables.}
\end{tcolorbox}
While the bank has determined that the classifier is fair in the sense that it does not discriminate according to known demographic features and relative to repayment history, these notions of fairness fail to account for the interest rate that the bank determines. As this has a direct and powerful impact on a lendee's likelihood of repayment, if it is set in an (intentionally or unintentionally) discriminatory manner, it can have a large and unaccounted-for effect on the evaluation of fairness metrics.

This is not specific to the loan approval setting; in many scenarios, firms control a number of ``levers'' that causally impact outcomes for classified populations---we call these \textit{extra-classificatory policies}. Such extra-classificatory policies can drastically affect outcomes, and so shape the data on which models are trained. Through such policies, a firm can create the appearance of demographic base rate discrepancies which then, via predictive models trained on historical data, serve to justify differential lending policies across demographic subpopulations. 

The issues pointed towards by extra-classificatory policies are multi-faceted and context-specific. As such, a key challenge to the FairML community is detailing such issues and forming mathematical models of them. We focus on binary policies (e.g.  \cite{coston2020counterfactual}), leaving more general settings to future work. 

\subsection{Cross-Dataset Analysis}\label{sec:cross_ds_analysis}

To demonstrate the prevalence of these issues, we analyze the presence of each measurement bias for the FairML benchmark datasets given in \citet{le2022survey}. We remove any datasets not associated with a concrete decision problem, leaving 14 datasets from a variety of different domains, including financial risk, criminal justice, and employment/university admissions. We summarise the results in Table \ref{tab:cross_dataset_main}, with the full table of datasets and biases along with rationales available in Appendix \ref{ap:cross_dataset_analysis}. Our results demonstrate the scope of the problem created by measurement bias, with all the datasets displaying at least one of the problems and 60\% displaying all three simultaneously. From this, we see that such biases are themselves a key part of the FairML problem, not optional complications. Moreover, this points to the need for methodological solutions for evaluating and training FairML models in settings where multiple biases are present simultaneously. 

\begin{table}
    \centering
    \begin{tabular}{|c|c|c|c|c|c|}
    \hline
        Proxy Bias & Selection Bias  & ECP Bias  &One Bias & Two Biases  & Three Biases  \\
        \hline
        69\% & 85\% & 85\% & 100\%  & 92\%  & 61\%\\
          \hline
    \end{tabular}\vspace{1mm}
    \caption{We document the proportion of realistic FairML benchmark datasets (from \citet{le2022survey}) that exhibit each of the three biases we discuss. For more details see App. \ref{ap:cross_dataset_analysis}.}
    \label{tab:cross_dataset_main}
\end{table}
\section{Graphical Causal Sensitivity Analysis}

Here we outline some technical background on graphical causal sensitivity analysis, from graphical sensitivity analysis to the autobounds framework \cite{duarte2023automated} which automates discrete sensitivity analysis. In the following section, we apply this to FairML to construct a sensitivity analysis tool for oblivious settings \cite{hardt2016equality}, where we do not observe covariates and all other variables are discrete. 


\paragraph{Notation} We let $Y$ denote the outcome the practitioner wishes to measure, $X$ the observed covariates, $A$ the protected/sensitive attribute, and $\hat{Y}$ the prediction of $Y$ with domains $\cY,\cX,\cA,\hat{\cY}$. 


\subsection{Causal Background}

We begin by defining the structural causal model (SCM) approach to causality \citep{pearl2009causality,richardson2013single}. Here we model causal relationships via deterministic functions of the observed variables and additional latent variables, with the latter representing the unobserved or random parts of the system. We will always label the observed variables $\mathbf{V}$ and the unobserved variables $\mathbf{U}$. These equations lead to a causal graph that has a node for each variable and a directed edge $V_1 \to V_2$ if $V_1$ is an argument of the function determining the value of $V_2$. To illustrate this, the following figure demonstrates the SCM and corresponding graph we assume throughout for the relationships between $A, X, Y, \text{ and } \hat Y$:
\begin{figure}[H]
    \centering
    \begin{subfigure}[t]{0.45\textwidth}
        \centering
        \vspace{.1em}
        \begin{tikzpicture}
            \node (A) [state, standard_edge_color, fill=standard_node_color] {$A$};
            \node (X) [state,dashed, standard_edge_color, fill=standard_node_color, right=1.5cm of A] {$X$};
            \node (Y) [state, standard_edge_color, fill=standard_node_color, right=1.5cm of X] {$Y$};
            \node (Yhat) [state, standard_edge_color, fill=standard_node_color, below=1cm of X] {$\hat{Y}$};
            \draw[->, thick, standard_edge_color] (A) edge (X);
            \draw[->, thick, standard_edge_color] (A) to [out=30,in=150] (Y);
            \draw[->, thick, standard_edge_color] (X) edge (Yhat);
            \draw[->, thick, standard_edge_color] (A) edge (Yhat);
            \draw[->, thick, standard_edge_color] (X) edge (Y);
        \end{tikzpicture}
        \caption{The directed acyclic graph representing the relationships between $A,Y,\hat{Y}$ and the unobserved $X$. The latent variables are implicit in the DAG.}
    \end{subfigure}
\hspace{1em}
\rulesep
\hspace{1em}
    \begin{subfigure}[t]{0.45\textwidth}
        \centering
        \vspace{0.9em} 
        \begin{minipage}{\textwidth}
        \vspace{3.5mm}
        \begin{align*}
            X &= f_{X}(A,U_1) \\
            Y &= f_{{Y}}(X,A,U_2)\\
            \hat{Y} &= f_{\hat{Y}}(X,A,U_3)\\
            \mathcal{C} &= \left(\left\{f_{X},f_{{Y}},f_{\hat{Y}}\right\}, P(\bU)\right)
        \end{align*} 
        \end{minipage}
        \caption{The SCM $\mathcal{C}$ over the DAG in (a) defined by a set of functions on $\bV = \{A, X, Y, \hat{Y}\}$ and a probability distribution over the implicit latent variables.}
    \end{subfigure}
    \caption{Example of a DAG and the corresponding SCM. Unobserved variables are dashed. }
    \label{fig:causal-graph}
\end{figure}
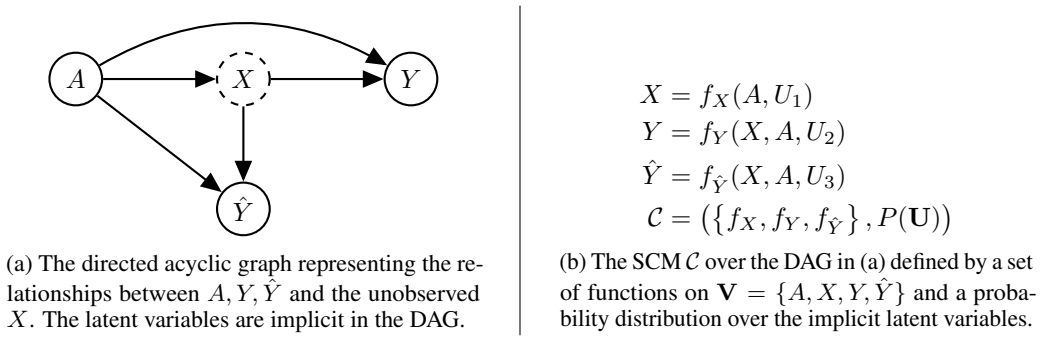

We will use $\cC$ to depict an SCM, which consists of a collection of functions and a probability distribution over the noise terms.  A complete definition of SCMs can be found in Appendix \ref{ap:scm_sec}. 


\paragraph{Marginalisation In Causal Models} 

Often---and especially in FairML applications---we do not observe all relevant variables. For example, in this work, we assume the covariates $X$ are unobservable. However, this is less of a problem than it originally appears due to latent projection, as introduced by \citet{verma2022equivalence}. Latent projection allows us to marginalize out any unobserved variables while preserving the causal structure over observed variables, as demonstrated by \citet{evans2016graphs}.  We visualize this process in Fig. \ref{fig:all_dags}, where we marginalize over $X$ leaving just $A, \hat Y, \text{ and } Y$ and latent variable, $U$. We outline this procedure in detail in Appendix \ref{ap:marginalization}. The important point is that we can always preserve the causal structure over our observable variables by using a finite number of latent variables. This is the case regardless of how many variables we marginalize out.

\subsection{Partial Identification and Sensitivity Analysis} 

We now show how structural causal models can be used to perform sensitivity analyses for causal (or non-causal) queries, first introducing the important concept of partial identification. 

\paragraph{Partial Identification} In partial identification, the goal is to understand what values a particular statistic 
can take relative to our assumptions. We call this a \emph{Query} and write it as a function $\cQ(\cC)$ which takes an SCM and returns a real number. For example, if we wanted to measure counterfactual fairness (for binary $A$), we could define the query $\cQ_{\mathrm{CF}}$ as:
\begin{align*}
    \cQ_{\mathrm{CF}} (\cC) \coloneqq P_{\cC}(\hat{Y}(A=1) \neq \hat{Y}(A=0))
\end{align*}
Where $\cQ_{\mathrm{CF}} (\cC) = 0$ exactly when the predictor is counterfactually fair according 
$\cC$ \cite{fawkes2023results}.

Given a query of interest, $\cQ$, the goal of partial identification is to understand what possible values $\cQ$ can take given the practitioner's prior knowledge and assumptions. Here we will consider partial identification for a fixed DAG and a set of constraints on the probability distributions and functions defining the SCM. 
Practitioners can encode these assumptions by defining a set of SCM models, which we will write $\cM$. The most natural example here is to let $\cM$ contain all possible causal models arising from the graph with the same observational distribution as the measured dataset. Practitioners can restrict this set of SCMs to incorporate more domain-specific information, making the bounds more informative. Using this notation, partial identification is rendered as a pair of optimization problems that lower and upper bound the query of interest over $\cM$:
\begin{align}\label{eq:bounding_problem}
    \min_{\Tilde{\cC}\in \cM} \cQ (\Tilde{\cC}) \leq \cQ (\cC) \leq \max_{\Tilde{\cC}\in \cM} \cQ (\Tilde{\cC})
\end{align}
\paragraph{Sensitivity Analysis} In sensitivity analysis, the goal is to understand how
violations of assumptions affect the measure of a statistic. The initial step is to define some sensitivity parameter, which measures the degree to which an assumption is violated. For example, in the proxy attribute literature \cite{chen2019fairness,kallus2022assessing}, the goal is to understand how sensitive fairness metrics are to mismeasured protected attributes. For example, as we will discuss later in terms of proxy bias, a natural sensitivity parameter would be $P(Y_P \neq Y)$, where $Y_P$ is the proxy outcome. We may then let $\cM_{\mathrm{Prox}}(\delta)$ be the set of causal models that have $P(Y_P \neq Y) \leq \delta$ and comply with the practitioner's assumptions. By repeatedly solving the partial identification problem for different $\delta$ to understand how large $\delta$ must be, the statistic $\cQ$ becomes uninformative. 
%
%

\subsection{Discrete Causal Sensitivity Analysis}\label{sec:discrete_sens}

To perform the partial identification required for causal sensitivity analysis, we have to solve the max/min problem in \eqref{eq:bounding_problem}. This problem as formulated generically is not tractably solvable, but additional structure on either the query, $\cQ$, or set of models, $\cM$, can lead to tractable optimization problems and computable bounds.  We focus on settings where all variables are discrete, which is particularly helpful for partial identification problems \cite{ geiger2013quantifier, bonet2013instrumentality, ramsahai2012causal} due to the function response framework \cite{balke1997bounds}. This framework takes advantage of the fact that, given a causal graph $\cG$, if we fix the latent variables $\bU$, the structural equations are deterministic functions of their other inputs in $\bV$. If the observed variables are discrete, there are only finitely many such functions. As a result of this, we can represent every single SCM using the one set of fixed structural equations and a distribution over some discrete latent variables, $\tilde{\bU}$. This means that any SCM with discrete observed variables and a fixed graph $\cG$ can be represented entirely by the distribution $P(\tilde{\bU})$, and so by a point in the probability simplex $\Delta^{k}$ for some $k$ \cite{duarte2023automated,evans2018margins}. 

\citet{duarte2023automated} showed that this allows partial identification problems \eqref{eq:bounding_problem} to be converted into tractable optimization problems, where now the set of causal models $\cM$ corresponds to a subset of the probability simplex $\cM^{\Delta}\subset \Delta^k$ and the query $\cQ$ becomes a function $f_{\cQ}:\Delta^k \to \mathbb{R}$. Moreover, any statement that can be written as a polynomial in probabilities over factual and counterfactual statements corresponds to a fractional polynomial in $\bp_{\cC}$. So, if $f_{\cQ}$ is a polynomial in such probabilities and any prior assumptions can be stated via probabilistic statements and arithmetic operations, the partial identification problem can be converted into a polynomial programming problem. \citet{duarte2023automated} propose to solve these partial identification problems via branch and bound solvers \citep{belotti2009branching}, which ensures that the program produces valid bounds even if convergence is not reached.

\section{Causal Sensitivity Analysis for FairML}\label{sec:causal_sens_analysis_for_fair}
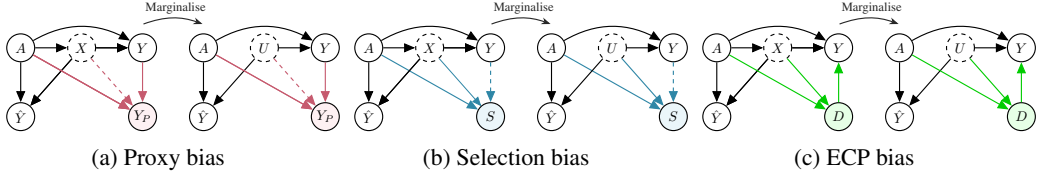
\begin{figure*}
\makebox[\linewidth][l]{
    \begin{subfigure}[t]{0.29\textwidth}
         \centering
         \scalebox{0.53}{\begin{tikzpicture}
    \node (A) [state, standard_edge_color, fill=standard_node_color] {$A$};
    \node (X) [state,dashed, standard_edge_color, fill=standard_node_color, right=\horizontalsep of A] {$X$};
    \node (Y) [state, standard_edge_color, fill=standard_node_color, right= \horizontalsep of X] {$Y$};
    \node (Yhat) [state, standard_edge_color, fill=standard_node_color, below= \verticalsep  of A] {$\hat{Y}$};
    \node (YP) [state, standard_edge_color, fill=proxy_node_color, below= \verticalsep  of Y] {$Y_P$};
    \draw[->, thick, standard_edge_color] (A) edge (X);
    \draw[->, thick, standard_edge_color] (A) to [out=30,in=150] (Y);
    \draw[->, thick, standard_edge_color] (X) edge (Yhat);
    \draw[->, thick, standard_edge_color] (A) edge (Yhat);
    \draw[->, thick, standard_edge_color] (X) edge (Y);
    \draw[->, thick, standard_edge_color] (X) edge (Y);
    \draw[->, thick, standard_edge_color] (X) edge (Yhat);
    \draw[->, thick, proxy_edge_color] (Y) edge (YP);
    \draw[->, thick, proxy_edge_color] (A) edge (YP);
    \draw[->, thick, proxy_edge_color] (A) edge (YP);
    \draw[->, thick, proxy_edge_color] (A) edge (YP);
    \draw[->, thick, dashed, proxy_edge_color] (X) edge (YP);

    \node (A_marginal) [state, standard_edge_color, fill=standard_node_color,right = \horizontalsep of Y] {$A$};
    \node (X) [state,dashed, standard_edge_color, fill=standard_node_color, right=\horizontalsep of A_marginal,draw=none] {};
    \node (U) [state,dashed, standard_edge_color, fill=standard_node_color, right=\horizontalsep of A_marginal] {$U$};
    \node (Y_marginal) [state, standard_edge_color, fill=standard_node_color, right= \horizontalsep of X] {$Y$};
    \node (Yhat_marginal) [state, standard_edge_color, fill=standard_node_color, below= \verticalsep  of A_marginal] {$\hat{Y}$};
    \node (YP_marginal) [state, standard_edge_color, fill=proxy_node_color, below= \verticalsep  of Y_marginal] {$Y_P$};
    \draw[->, thick, standard_edge_color] (A_marginal) to [out=30,in=150] (Y_marginal);
    \draw[->, thick, standard_edge_color] (U) edge (Y_marginal);
    \draw[->, thick, standard_edge_color] (U) edge (Yhat_marginal);
    \draw[->, thick,dashed, proxy_edge_color] (U) edge (YP_marginal);
    \draw[->, thick, standard_edge_color] (A_marginal) edge (Yhat_marginal);
    \draw[->, thick, proxy_edge_color] (Y_marginal) edge (YP_marginal);
    \draw[->, thick, proxy_edge_color] (A_marginal) edge (YP_marginal);
    \draw[->, thick, proxy_edge_color] (A_marginal) edge (YP_marginal);
    \node (C) [point, transparent, above=0.2 of Y,]{};
    \node (D) [point, transparent, above=0.2 of A_marginal]{};
    \draw[arrow] (C) to [in=160, out=20] node[pos=0.3, above] {\hspace{0.7cm}\footnotesize Marginalise} (D);
\end{tikzpicture}}
         \caption{Proxy bias}
         \label{fig:proxy_struc}
\end{subfigure}
        \hspace{0.4cm}
     \begin{subfigure}[t]{0.29\textwidth}
     \centering
    \scalebox{0.53}{\begin{tikzpicture}
    \node (A) [state, standard_edge_color, fill=standard_node_color] {$A$};
    \node (X) [state,dashed, standard_edge_color, fill=standard_node_color, right=\horizontalsep of A] {$X$};
    \node (Y) [state, standard_edge_color, fill=standard_node_color, right= \horizontalsep of X] {$Y$};
    \node (Yhat) [state, standard_edge_color, fill=standard_node_color, below= \verticalsep  of A] {$\hat{Y}$};
    \node (S) [state, standard_edge_color, fill=selection_node_color, below=\verticalsep of Y] {$S$};
    \draw[->, thick, selection_edge_color] (A) edge (S);
    \draw[->, thick, standard_edge_color] (A) edge (X);
    \draw[->, thick, standard_edge_color] (X) edge (Yhat);
    \draw[->, thick, standard_edge_color] (A) edge (Yhat);
    \draw[->, thick,dashed, selection_edge_color] (Y) edge (S);
    \draw[->, thick, standard_edge_color] (A) to [out=30,in=150] (Y);
    \draw[->, thick, standard_edge_color] (X) edge (Y);
    \draw[->, thick, standard_edge_color] (X) edge (Y);
    \draw[->, thick, standard_edge_color] (X) edge (Yhat);
    \draw[->, selection_edge_color] (X) edge (S);
    \node (A_marginal) [state, standard_edge_color, fill=standard_node_color,right = \horizontalsep of Y] {$A$};
    \node (X) [state,dashed, standard_edge_color, fill=standard_node_color, right=\horizontalsep of A_marginal,draw=none] {};
    \node (U) [state,dashed, standard_edge_color, fill=standard_node_color, right=\horizontalsep of A_marginal] {$U$};
    \node (Y_marginal) [state, standard_edge_color, fill=standard_node_color, right= \horizontalsep of X] {$Y$};
    \node (Yhat_marginal) [state, standard_edge_color, fill=standard_node_color, below= \verticalsep  of A_marginal] {$\hat{Y}$};
    \node (S) [state, standard_edge_color, fill=selection_node_color, below= \verticalsep  of Y_marginal] {$S$};
    \draw[->, thick, selection_edge_color] (A_marginal) edge (S);
    \draw[->, thick, standard_edge_color] (A_marginal) to [out=30,in=150] (Y_marginal);
    \draw[->, thick, standard_edge_color] (A_marginal) edge (Yhat_marginal);
    \draw[->, thick, standard_edge_color] (U) edge (Yhat_marginal);
    \draw[->, thick, standard_edge_color] (U) edge (Y_marginal);
    \draw[->, thick, dashed, selection_edge_color] (Y_marginal) edge (S);
    \draw[->, thick, selection_edge_color] (U) edge (S);
    \node (C) [point, transparent, above=0.2 of Y,]{};
    \node (D) [point, transparent, above=0.2 of A_marginal]{};
    \draw[arrow] (C) to [in=160, out=20] node[pos=0.3, above] {\hspace{0.7cm}\footnotesize Marginalise} (D);
    \end{tikzpicture}}
        \caption{Selection bias}
        \label{fig:selec_dag}
     \end{subfigure}
     \hspace{0.4cm}
     \begin{subfigure}[t]{0.29\textwidth}
         \centering
    \scalebox{0.53}{\begin{tikzpicture}
    \node (A) [state, standard_edge_color, fill=standard_node_color] {$A$};
    \node (X) [state,dashed, standard_edge_color, fill=standard_node_color, right=\horizontalsep of A] {$X$};
    \node (Y) [state, standard_edge_color, fill=standard_node_color, right= \horizontalsep of X] {$Y$};
    \node (Yhat) [state, standard_edge_color, fill=standard_node_color, below= \verticalsep  of A] {$\hat{Y}$};
    \node (ECP) [state,ellipse, standard_edge_color, fill=policy_node_color, below= \verticalsep  of Y] {$D$};
    \draw[->, thick, standard_edge_color] (A) edge (X);
    \draw[->, thick, standard_edge_color] (A) to [out=30,in=150] (Y);
    \draw[->, thick, standard_edge_color] (X) edge (Yhat);
    \draw[->, thick, standard_edge_color] (A) edge (Yhat);
    \draw[->, thick, standard_edge_color] (X) edge (Y);
    \draw[->, thick, standard_edge_color] (X) edge (Y);
    \draw[->, thick, standard_edge_color] (X) edge (Yhat);
    \draw[->, thick, policy_edge_color] (A) edge (ECP);
    \draw[->, thick, policy_edge_color] (ECP) edge (Y);
    \draw[->, thick, policy_edge_color] (X) edge (ECP);
    \node (A_marginal) [state, standard_edge_color, fill=standard_node_color,right = \horizontalsep of Y] {$A$};
    \node (X) [state,dashed, standard_edge_color, fill=standard_node_color, right=\horizontalsep of A_marginal,draw=none] {};
    \node (U) [state,dashed, standard_edge_color, fill=standard_node_color, right=\horizontalsep of A_marginal] {$U$};
    \node (Y_marginal) [state, standard_edge_color, fill=standard_node_color, right= \horizontalsep of X] {$Y$};
    \node (Yhat_marginal) [state, standard_edge_color, fill=standard_node_color, below= \verticalsep  of A_marginal] {$\hat{Y}$};
    \node (ECP_marginal) [state,ellipse, standard_edge_color, fill=policy_node_color, below= \verticalsep  of Y_marginal] {$D$};
    \draw[->, thick, standard_edge_color] (A_marginal) to [out=30,in=150] (Y_marginal);
    \draw[->, thick, standard_edge_color] (A_marginal) edge (Yhat_marginal);
    \draw[->, thick, standard_edge_color] (U) edge (Yhat_marginal);
    \draw[->, thick, standard_edge_color] (U) edge (Y_marginal);
    \draw[->, thick, policy_edge_color] (U) edge (ECP_marginal);
    \draw[->, thick, policy_edge_color] (A_marginal) edge (ECP_marginal);
    \draw[->, thick, policy_edge_color] (ECP_marginal) edge (Y_marginal);
    \node (C) [point, transparent, above=0.2 of Y,]{};
    \node (D) [point, transparent, above=0.2 of A_marginal]{};
    \draw[arrow] (C) to [in=160, out=20] node[pos=0.3, above] {\hspace{0.7cm}\footnotesize Marginalise} (D);
\end{tikzpicture}}
         \caption{ECP bias}
         \label{fig:ecp_bias}
     \end{subfigure}}
\caption{Causal graphs for each of the biases showing the assumed causal structure over all variables, and the implied structure upon marginalizing out $X$. Dashed lines denote varying assumptions.}
\label{fig:all_dags}
\end{figure*}

We now apply the methodology outlined in Section \ref{sec:discrete_sens} to the FairML setting to create a sensitivity analysis tool for parity metric evaluations. We focus on settings where $(A,Y,\hat{Y})$ are discrete and the auditor does not have access to the covariates $X$. The following steps lead to a sensitivity analysis tool for any measurement bias that can be stated obliviously and any statistic that can be written as a polynomial in factual and counterfactual probabilities: 

\begin{enumerate}[leftmargin=18pt]
    \setlength{\itemsep}{8pt} 
    \setlength{\parskip}{-3pt} %
    \item Determine how the sampled population differs from the target population, expressing the difference in terms of a causal graph over all variables. Marginalize out $X$, to leave a causal structure over $(A,Y,\hat{Y})$ and any bias-specific variables. 
    \item Choose a sensitivity parameter to control the degree of measurement bias and provide any additional knowledge relevant to the task at hand.
    \item  With all this perform the sensitivity analysis by repeatedly solving the optimization problem in \eqref{eq:bounding_problem} for the test statistic using the methodology outlined in \ref{sec:discrete_sens}.
\end{enumerate}

We apply this procedure to each of the measurement biases given in Section \ref{sec:meas_bias}, using the causal graphs in Fig. \ref{fig:all_dags} to depict the biases. Causal graphs are context-specific, so we do not expect these to be appropriate in all cases. Instead, we use them as plausible graphs for showcasing our framework.

\paragraph{Proxy Label Bias} \textbf{(1)}  We represent the difference between the measured and target populations using an additional variable $Y_P$. This denotes the measured proxy of the outcome. We assume that this outcome is a noisy version of the true outcome $Y$, where the noise depends on $A$ and can optionally depend on $X$. As we show in Appendix \ref{ap:DAG_examples}, assuming the proxy depends on additional unobservables leads to the same graph over $(Y,A,\hat{Y})$. \textbf{(2)}  For the sensitivity parameter, we use the probability that the proxy differs from the outcome the practitioner hopes to measure, so $P(Y_P \neq Y)$. 

\paragraph{Selection Bias} \textbf{(1)} We signal whether or not an individual is selected with a binary variable $S$, which we assume depends on an individual’s protected attribute, covariates, and, in some instances, the outcome. \textbf{(2)} For the sensitivity parameter, we choose $P(S=0)$, the probability of a sample not being selected. This controls the proportion of the population which remains unobserved. However, there are other natural choices for sensitivity parameters, such as statistical measures of sample quality \cite{meng2018statistical}. For selection bias, the practitioner could have significant information about the unselected population which could be used to tighten bounds. For example, selective labels would lead to information on the covariates for the unselected populations, and thus the $(A,\hat{Y})$ proportions. 

\paragraph{Extra-Classificatory Policies} There are a plurality of plausible ways to mathematically represent the problems arising from ECPs, with different formulations being better suited to different concerns. Here we proceed as follows:  \textbf{(1)} We use an additional variable, $D$, to depict the policies in the firm’s control, which we take to be binary. We then use counterfactual versions of parity metrics, similar to \citet{coston2020counterfactual}, with formulations given in Appendix \ref{ap:CF_parity_metrics}. \textbf{(2)} We assume the treatment is monotonic, so $Y(D=1)\geq Y(D=0)$ and use the average treatment effect, $\mathbb{E}(Y(D=1) - Y(D=0))$, as a sensitivity parameter. For ECPs, there are additional constraints that could be added, for example, if a policy is observed we can include that data explicitly. 

\section{Experiments}\label{sec:experiments}

In this section, we showcase the use of causal sensitivity analysis for fairness applications by performing sensitivity analyses for each of the biases introduced above on a set of benchmark datasets. In doing so, we aim to reveal some of the nuances of performing sensitivity analyses for various types of bias. These experiments further lend themselves to some general conclusions about the complexity of real-world fairness evaluations. We present additional results in Appendix \ref{ap:additional_results}, including tests of causal fairness metrics that reveal the effects of measurement bias in a causal setting.
\subsection{Recreating \citet{fogliato2020fairness} under varying assumptions}\label{sec:proxy_bias}




\citet{fogliato2020fairness} aims to assess how sensitive the false positive rate (FPR), false negative rate (FNR), and positive predictive value (PPV) are to proxy bias for a predictor trained on the COMPAS dataset \cite{angwin2022machine}, under the assumption that $P(Y_P=1 | Y=0)=0$. The outcome in the COMPAS dataset is reported re-offense. This outcome a proxy, because we cannot actually observe whether someone has re-offended, we only know if they were convicted of a new offense. The assumption that $P(Y_P=1 | Y=0)=0$ implies that whenever someone is reported to have re-offended they actually re-offend; all the measurement error hence comes from people who did re-offend who did not get caught. We start by recreating the results of the original study from just the DAG in Fig. \ref{fig:fogliato_varying_assumptions}(a). This confirms the correctness of the computational bounds: they always match the true algebraically derived bounds in \citet{fogliato2020fairness}.

\begin{figure*}
    \centering
    \includegraphics[width = \textwidth]{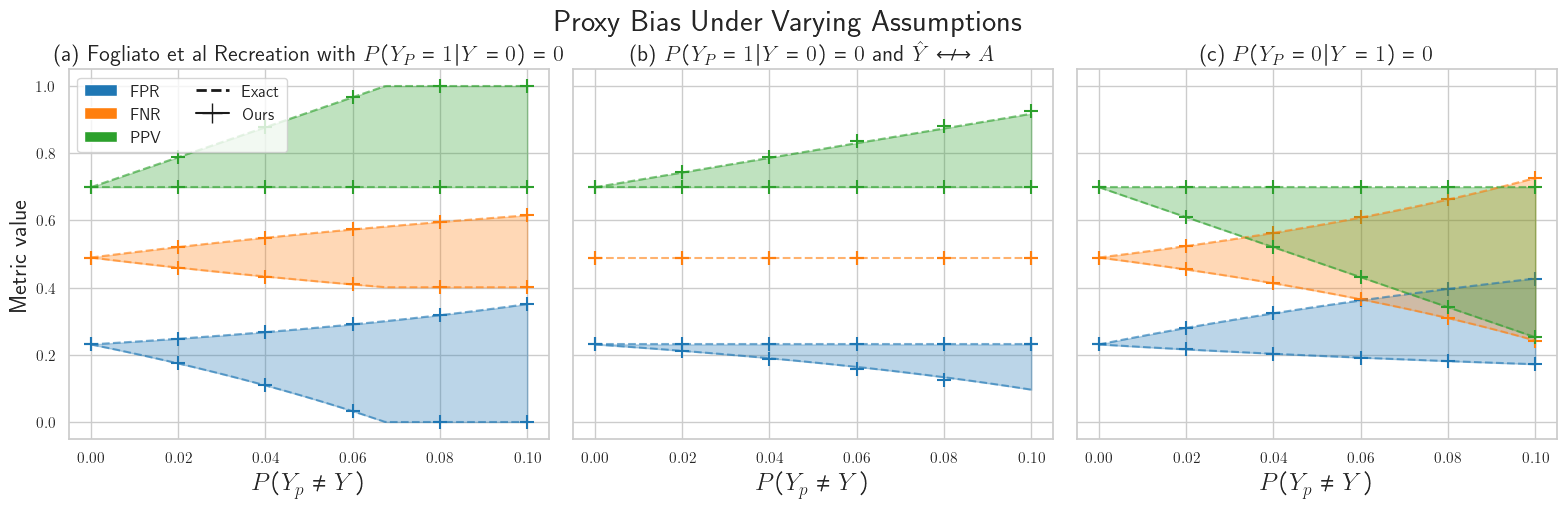}
    \caption{In this we directly recreate the plots from \citet{fogliato2020fairness} for a predictor trained on the COMPAS dataset, allowing for some probabilistic and causal assumptions to vary. The dashed lines represent exact bounds on each statistic for increasing $P(Y_P \neq Y)$, which follow from \citet{fogliato2020fairness} or our derivations in Appendix \ref{ap:proxy_identification_results}. (a) represents the original setting, where we have $P(Y_P=1 \mid Y=0) = 0$, in (b) we drop the dashed edge between $X$ and $Y_P$ in the causal graph in Fig. \ref{fig:proxy_struc}, and finally for (c) we instead take $P(Y_P=0 \mid Y=1) = 0$. As we can see, at all points we query, the automatically derived bounds recover the algebraically derived bounds.}
    \label{fig:proxy_results}
\end{figure*}

Next we demonstrate the flexibility of our framework by switching from the assumption that $P(Y_P=1 | Y=0)=0$ to $P(Y_P=0 | Y=1)=0$. This is probably a less realistic assumption in the COMPAS dataset, implying all measurement error comes from false convictions rather than under-reporting, but in a different context, this might be a more reasonable assumption. We derive the algebraic bounds to confirm the computational bounds match. The point here is that practitioners can easily encode whatever assumptions make sense in their particular context and quickly get results without needing to algebraically derive bounds, and that these results really do vary under different assumptions.

Finally, in Fig. \ref{fig:fogliato_varying_assumptions}(c), we drop the dashed edge between $X$ and $Y_P$ in Fig. \ref{fig:proxy_struc}. This might seem like an odd experiment: we do not actually observe $X$, and $X$ already causes $Y$, so how could dropping the edge from $X$ to $Y_P$ really matter? Without the $X \rightarrow Y_P$ edge $Y_P$ is independent of $U$ conditional on $Y$, which significantly tightens the bounds for FPR and PPV and fully identifies FNR.\footnote{On a technical level, the identification of the FNR points to the utility of sensitivity analysis for discovering new FairML-specific identification theory, an area which has received little attention to date.} Once again, we derive the algebraic bounds and find that the computational bounds match. This final experiment simultaneously demonstrates potential drawbacks of causal sensitivity analysis and the enormous benefit of being able to easily run analyses under varying assumptions. Sensitivity analysis can be dangerously misleading if we include unrealistic assumptions, giving us a false sense of security. But the remedy is straightforward: analysts should always run an assumption-lite analysis before incorporating more domain-specific information so that they can understand which assumptions are driving their results and make a decision about whether those assumptions are realistic. 



\subsection{Intersection of Biases}
\begin{wrapfigure}{r}{0.44\textwidth}
 \vspace{-0.5cm}
  \begin{center}
    \includegraphics[width=0.42\textwidth]{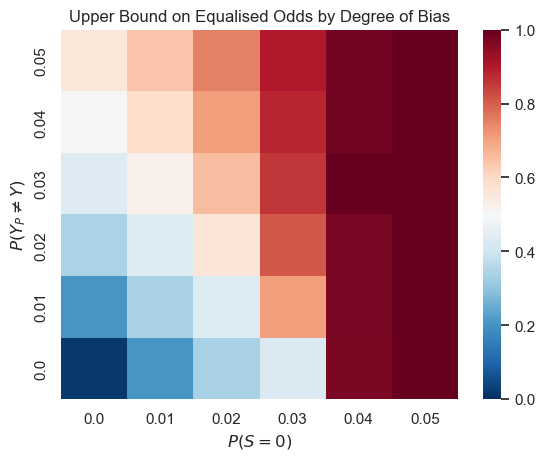}
  \end{center}
  \caption{Combination of Proxy and Selection Bias for an equalized odds predictor on the Adult dataset.}
  \label{fig:proxy_and_select}
\end{wrapfigure}

Motivated by Section \ref{sec:cross_ds_analysis}, we now consider settings exhibiting multiple simultaneous biases. In Fig. \ref{fig:proxy_and_select}, we provide a sensitivity plot for proxy and selection bias simultaneously for an equalized odds predictor trained on the \textit{Adult} dataset \cite{misc_adult_2}. We plot the upper bound of the equalized odds value as the sensitivity parameters for both biases vary, with a maximum of 5\% proxy labels and 5\% of the population being unmeasured. 
This exercise demonstrates that the presence of multiple biases can quickly render parity evaluation meaningless, with the upper bound reaching the maximum possible value of the statistic. Whilst an upper bound inherently represents the worst-case scenario, this experiment showcases that practitioners must provide additional context-specific assumptions to imbue parity evaluations with meaning in the presence of measurement biases, especially when multiple biases are present simultaneously. We include additional results in Appendix \ref{ap:selection_and_proxy_plots}, which show that biases compound in unpredictable, nonlinear ways.

\begin{figure*}
    \centering
    \includegraphics[width=0.9\textwidth]{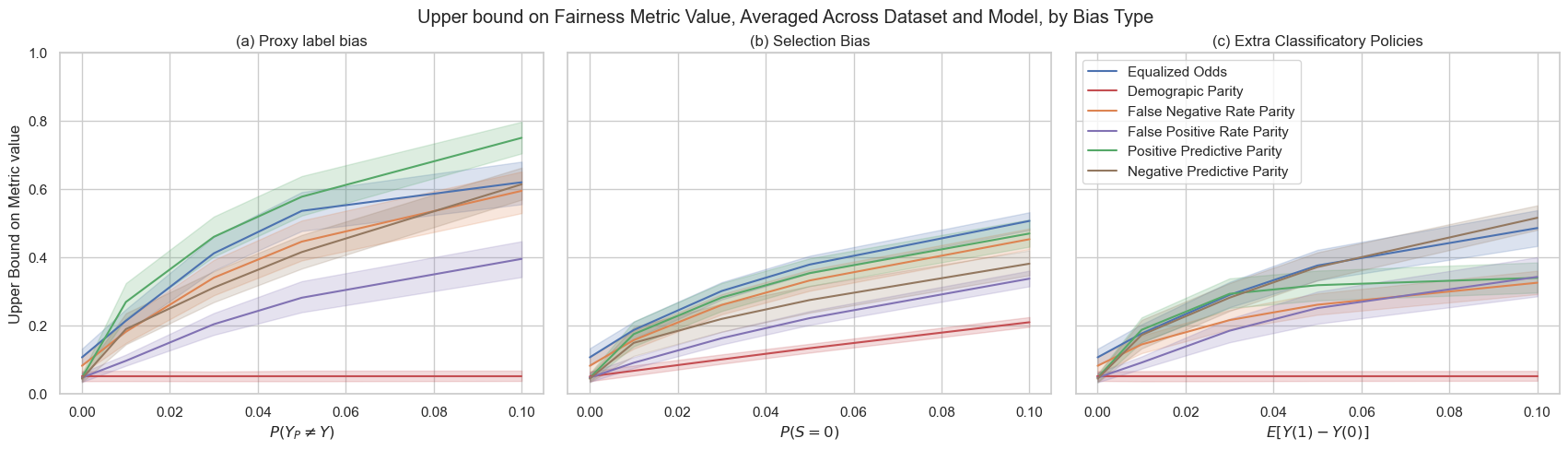}
    \caption{Results of our cross-dataset study, in which we assess the sensitivity of multiple ML predictors trained to satisfy various parity constraints on the fairness benchmarking datasets listed in Appendix \ref{ap:cross_dataset_analysis}. We can see different metrics are susceptible to bias in different ways, with notably demographic parity being more robust than more complicated, outcome-dependent metrics.}
    \label{fig:tradoff}
\end{figure*}
\subsection{Cross-Dataset Experiments}\label{sec:tradeoff}

Finally, we leverage our framework to systematically explore the sensitivity of the most ubiquitous parity metrics to the measurement biases here surveyed on real benchmark datasets. We run sensitivity analyses on 14 commonly used fairness datasets from \cite{le2022survey} training logistic regression, naïve Bayes, and decision trees to satisfy different parity metrics. In Fig. \ref{fig:tradoff}, we provide the results of this experiment, plotting the average upper bound for each metric value at different levels of sensitivity. This further supports our thesis that measurement biases present a severe issue to the informativeness of parity metric evaluation, as we see that even small amounts of bias can render the original parity evaluation meaningless relative to what it seeks to measure. Secondly, we find that this fragility is not uniform across metrics. We can see that demographic parity is, unsurprisingly, more robust than more complicated outcome-dependent metrics. Equalized odds is less robust than FPR/FNR due to the fact that it is measured as the maximum of both. Finally, predictive parity metrics are less robust to biased outcomes than metrics which involve conditioning on the outcome, such as FPR/FNR and equalized odds. We present full experiment details, further analysis, and plots per dataset in Appendix \ref{ap:tradeoff}.

\section{Codebase and Web Interface}\label{sec:code}

We have developed both a \href{https://anonymous.4open.science/r/fragility_fairness/README.md}{codebase} and a \href{https://fragile.ml}{web interface} to ensure our framework is as usable as possible. The core of both tools are our bias configs -- described in App. \ref{ap:code} and our codebase documentation---which allow for portable, modular, and reproducible sensitivity analysis. Our codebase is essentially a parser for these configs along with a set of fairness metrics we have implemented. Biases and metrics are designed to allow flexibility to suit particular use cases. The codebase wraps around these biases and fairness metrics and parses them into optimization problems which can be solved to produce bounds, currently using the Autobounds backend. We also provide a website that acts as a user interface for less technical users. The website allows for configs to be loaded or exported, and every element of the config can be edited via the interface. Users can also upload datasets and analyze the sensitivity to their chosen fairness metric/bias combinations.

\section*{Discussion \& Limitations}\label{sec:discussion}
In this work, we have described three prevalent measurement biases, argued that they are almost always present in FairML applications, and put forward a toolkit based on newly available methods in causal inference \cite{duarte2023automated} for understanding how these biases impact parity metric evaluation. To apply this methodology, we have focused on the discrete, oblivious setting, however, causal sensitivity analysis is not limited to this domain \citep{xia2022neural,chernozhukov2022long} and therefore its usefulness to FairML should not be either. We hope that the present work will encourage the causal inference community to look to FairML as a key application area for sensitivity analysis. Finally, additional biases that are likely topical to FairML cannot be readily expressed in this framework. For example, the issue of interference, where individuals within a classified population can affect the outcomes of others, would seem to hold in many FairML applications but can be a challenge to express graphically.

\section*{Acknowledgments}
JF gratefully acknowledges funding from the EPSRC. NF thanks the Rhodes Trust for supporting their studies at Oxford, where they conducted a portion of this research. MA is indebted to the Machine Learning Department at Carnegie Mellon University and the ACMI lab for support of this work. 
\bibliography{mybib.bib}
\bibliographystyle{abbrvnat}
\newpage
\section*{Checklist}

The checklist follows the references.  Please
read the checklist guidelines carefully for information on how to answer these
questions.  For each question, change the default \answerTODO{} to \answerYes{},
\answerNo{}, or \answerNA{}.  You are strongly encouraged to include a {\bf
justification to your answer}, either by referencing the appropriate section of
your paper or providing a brief inline description.  For example:
\begin{itemize}
  \item Did you include the license to the code and datasets? \answerYes{See our repository.}
  \item Did you include the license to the code and datasets? \answerNo{The code and the data are proprietary.}
  \item Did you include the license to the code and datasets? \answerNA{}
\end{itemize}
Please do not modify the questions and only use the provided macros for your
answers.  Note that the Checklist section does not count towards the page
limit.  In your paper, please delete this instructions block and only keep the
Checklist section heading above along with the questions/answers below.

\begin{enumerate}

\item For all authors...
\begin{enumerate}
  \item Do the main claims made in the abstract and introduction accurately reflect the paper's contributions and scope?
    \answerYes{}
  \item Did you describe the limitations of your work?
    \answerYes{}
  \item Did you discuss any potential negative societal impacts of your work?
    \answerYes{}
  \item Have you read the ethics review guidelines and ensured that your paper conforms to them?
    \answerYes{}
\end{enumerate}

\item If you are including theoretical results...
\begin{enumerate}
  \item Did you state the full set of assumptions of all theoretical results?
    \answerYes{}
	\item Did you include complete proofs of all theoretical results?
    \answerYes{}
\end{enumerate}

\item If you ran experiments (e.g. for benchmarks)...
\begin{enumerate}
  \item Did you include the code, data, and instructions needed to reproduce the main experimental results (either in the supplemental material or as a URL)?
   \answerYes{}
  \item Did you specify all the training details (e.g., data splits, hyperparameters, how they were chosen)?
    \answerYes{}
	\item Did you report error bars (e.g., with respect to the random seed after running experiments multiple times)?
    \answerYes{}
	\item Did you include the total amount of compute and the type of resources used (e.g., type of GPUs, internal cluster, or cloud provider)?
    \answerYes{}
\end{enumerate}

\item If you are using existing assets (e.g., code, data, models) or curating/releasing new assets...
\begin{enumerate}
  \item If your work uses existing assets, did you cite the creators?
     \answerYes{}
  \item Did you mention the license of the assets?
     \answerYes{}
  \item Did you include any new assets either in the supplemental material or as a URL?
     \answerYes{}
  \item Did you discuss whether and how consent was obtained from people whose data you're using/curating?
     \answerNA{}
  \item Did you discuss whether the data you are using/curating contains personally identifiable information or offensive content?
    \answerNA{}
\end{enumerate}

\item If you used crowdsourcing or conducted research with human subjects...
\begin{enumerate}
  \item Did you include the full text of instructions given to participants and screenshots, if applicable?
    \answerNA{}
  \item Did you describe any potential participant risks, with links to Institutional Review Board (IRB) approvals, if applicable?
    \answerNA{}
  \item Did you include the estimated hourly wage paid to participants and the total amount spent on participant compensation?
    \answerNA{}
\end{enumerate}

\end{enumerate}

\newpage
\appendix
\section{Disparity Metric Definitions}\label{ap:disparity_metrics}
\subsection{Observational Metrics}
\paragraph{False Positive Rate Parity} 

Definition: $\hat{Y} \perp A \mid Y=0$

Measured as: $P(\hat{Y}=1 \mid A=0,Y=0)-P(\hat{Y}=1 \mid A=1,Y=0)$

\paragraph{False Negative Rate Parity} 

Definition: $\hat{Y} \perp A \mid Y=1$

Measured as: $P(\hat{Y}=1 \mid A=0,Y=1)-P(\hat{Y}=1 \mid A=1,Y=1)$

\paragraph{Positive Predictive Parity} 

Definition: $Y \perp A \mid \hat{Y}=1$

Measured as: $P(Y=1 \mid A=0,\hat{Y}=1)-P(Y=1 \mid A=1,\hat{Y}=1)$

\paragraph{Negative Predictive Parity} 

Definition: $Y \perp A \mid \hat{Y}=0$

Measured as: $P(Y=1 \mid A=0,\hat{Y}=1)-P(Y=1 \mid A=1,\hat{Y}=0)$

\paragraph{Equalized Odds} 

Definition: $Y \perp A \mid \hat{Y}$

Measured as: $\max \left\{\mathrm{FPR}(Y,A,\hat{Y}), \mathrm{FNR}(Y,A,\hat{Y}) \right\}$ for false positive rate (FPR) and false negative rate (FNR) given above.

\subsection{ECP Parity Metric Definitions}\label{ap:CF_parity_metrics}

\paragraph{Counterfactual False Positive Rate Parity} 

Definition: $\hat{Y} \perp A \mid Y(D=1)=0$

Measured as: $P_{\cC}(\hat{Y}=1 \mid A=0,Y(D=1)=0)-P(\hat{Y}=1 \mid A=1,Y(D=1)=0)$

\paragraph{Counterfactual False Negative Rate Parity} 

Definition: $\hat{Y} \perp A \mid Y=1$

Measured as: $P(\hat{Y}=1 \mid A=0,Y(D=1)=1)-P(\hat{Y}=1 \mid A=1,Y(D=1)=1)$

\paragraph{Counterfactual Positive Predictive Parity} 

Definition: $Y(D=1) \perp A \mid \hat{Y}=1$

Measured as: $P(Y(D=1)=1 \mid A=0,\hat{Y}=1)-P(Y(D=1)=1 \mid A=1,\hat{Y}=1)$

\paragraph{Counterfactual Negative Predictive Parity} 

Definition: $Y(D=1) \perp A \mid \hat{Y}=0$

Measured as: $P(Y(D=1)=1 \mid A=0,\hat{Y}=1)-P(Y(D=1)=1 \mid A=1,\hat{Y}=0)$

\paragraph{Counterfactual Equalised Odds} 

Definition: $Y(D=1) \perp A \mid \hat{Y}$

Measured as: $\max \left\{\mathrm{CF_FPR}(Y,A,\hat{Y}), \mathrm{CF_FNR}(Y,A,\hat{Y}) \right\}$ for counterfactual false positive rate ($\mathrm{CF_FPR}$) and counterfactual false negative rate ($\mathrm{CF_FNR}$) given above.
\section{Technical Description}

\subsection{Structural Causal Model definition}\label{ap:scm_sec}

\begin{definition}\label{def:SCM}
A \textbf{structural causal model} (SCM) over the variables $\bV=\{V_1,\cdots,V_d\}$ with latent variables $\bU=(U_1,\cdots,U_k)$ consists of a set of structural assignments so that every variable $V_i \in \bV$ can be written as:
\begin{align*}
    f_{V_i} (\mathrm{Pa}(V_i)), \hspace{0.2cm} i=1,...,d,
\end{align*}
Where $\mathrm{Pa}(V_i) \subset \bV \cup \bU$ are the \textbf{parents} of $V_i$ respectively, $f_{V_i}$ is the \textbf{structural equation} for $V_i$, and we have a distribution $P(\bU)$ which we assume factorises as $P(\bU) = \prod^k_{i=1} P(U_i)$. The \textbf{causal graph}, $\cG$, arising from the structural causal model consists of a vertex for each variable in $\bV \cup \bU$, and an edge $Z \to V$  if $Z \in \mathrm{Pa}(V)$ for $Z \in \bV \cup \bU$ and $V \in \bV$; we assume throughout  that $\cG$ is acyclic. Finally, letting $\cF$ be the set of structural equations, we can denote a causal model by $\cC = (\cF,P(\bU))$ and the set of all causal models over $\bV$ as $\mathbb{M}_{\bV}$.
\end{definition}

From the structural causal model we get the \emph{observational distribution}, $P(\bV)$, by propagating the noise through the structural equations. The \emph{potential outcomes} (counterfactuals) when intervening on a set $\bA \subset \bV$ are defined via recursive substitution \cite{robins2010alternative}, so that when intervening to set $\bA = \ba$ we take $U_j(\ba) = U_j$ and $\bA(\ba) = \ba$ and then defining the general potential outcome as $ V_i(\ba) \coloneqq f_{V_i} \left( \{ {Z}(\ba) | {Z} \in  \mathrm{Pa}(V_i) \}  \right)$.
For any event $\star$ involving factual or counterfactual versions of variables in $\bV$, we use $P_{\cC}(\star)$ to denote the probability of this event under the structural causal model $\cC$.

\subsection{Margnalisation in DAGs}\label{ap:marginalization}

\paragraph{Marginalisation Operation} Suppose $\bV$ can be split as $\bV = \Tilde{\bV} \cup \Tilde{\bU}$ where we are interested in the causal structure over $\Tilde{\bV}$ and do not observe the variables $\Tilde{\bU}$. We start from a causal graph $\cG$, with unobserved $\Tilde{\bU}$ we marginalise to get to a graph $\cG^{\prime}$ which is of the form of Definition \ref{def:SCM} by doing the following:
\begin{enumerate}
    \item For all $U \in \Tilde{\bU}$, add an edge $Z \to \Tilde{Z}$ if the current graph contains $Z \to U \to \Tilde{Z}$ and then delete any edges $Z \to U$,
    \item After completing the first step for all variables in $\Tilde{\bU}$, delete any $U$ if there exists another $\Tilde{U} \in \Tilde{\bU}$ that influences all of the variables $U$ influences. 
\end{enumerate}
\citet{evans2016graphs} showed that there is a structural causal model over the resulting graph which preserves the causal structure over the variables $\Tilde{\bV}$. Importantly, due to the deletion step, this model has a bounded number of unobserved variables, regardless of how large the set $\Tilde{U}$ is. 

\paragraph{Graphical examples} 
\begin{figure}
    \centering
    \begin{tikzpicture}
    \node (X1) [state, standard_edge_color, fill=standard_node_color] {$X_1$};
    \node (X2) [state, standard_edge_color, fill=standard_node_color, right=0.5*\horizontalsep of X1,draw=none] {};
    \node (X3) [state, standard_edge_color, fill=standard_node_color, right=0.5*\horizontalsep of X2] {$X_2$};
    \node(U) [state, dashed, standard_edge_color, fill=standard_node_color, below=\verticalsep of X2] {$U$};
    \node(U) [state, dashed, standard_edge_color, fill=standard_node_color, below=\verticalsep of X2] {$U$};
    \node(bottom_mid) [state, standard_edge_color, fill=standard_node_color, below=\verticalsep of U,draw=none] {};
    \node(bottom_mid) [state, standard_edge_color, fill=standard_node_color, below=\verticalsep of U] {$X_4$};
    \node(X4) [state, standard_edge_color, fill=standard_node_color, left=\horizontalsep of bottom_mid] {$X_3$};
    \node(X5) [state, standard_edge_color, fill=standard_node_color, right=\horizontalsep of bottom_mid] {$X_5$};
    \draw[->, thick, standard_edge_color] (X1) to (U);
    \draw[->, thick, standard_edge_color] (X3) to (U);
    \draw[->, thick, standard_edge_color] (U) to (X4);
    \draw[->, thick, standard_edge_color] (U) to (X5);
    \draw[->, thick, standard_edge_color] (U) to (bottom_mid);

    \node (X1_marginal) [state, standard_edge_color, fill=standard_node_color, right=5*\horizontalsep of X1] {$X_1$};
    \node (X2_marginal) [state, standard_edge_color, fill=standard_node_color, right=0.5*\horizontalsep of X1_marginal,draw=none] {};
    \node (X3_marginal) [state, standard_edge_color, fill=standard_node_color, right=0.5*\horizontalsep of X2_marginal] {$X_2$};
    \node(U_marginal) [state,  standard_edge_color, fill=standard_node_color, below=\verticalsep of X2_marginal,draw=none] {$U$};
    \node(U_marginal) [state, standard_edge_color, fill=standard_node_color, below=\verticalsep of X2_marginal,draw=none] {};

    \node(bottom_mid_marginal) [state, standard_edge_color, fill=standard_node_color, below=\verticalsep of U_marginal,draw=none] {};
    
    \node(U_marginal_new) [state, dashed, standard_edge_color, fill=standard_node_color, below=\verticalsep of bottom_mid_marginal] {$U$};
        
    \node(bottom_mid_marginal) [state, standard_edge_color, fill=standard_node_color, below=\verticalsep of U_marginal] {$X_4$};
    \node(X4_marginal) [state, standard_edge_color, fill=standard_node_color, left=\horizontalsep of bottom_mid_marginal] {$X_3$};
    \node(X5_marginal) [state, standard_edge_color, fill=standard_node_color, right=\horizontalsep of bottom_mid_marginal] {$X_5$};
    
    \draw[->, thick, standard_edge_color] (X1_marginal) to (X4_marginal);
    \draw[->, thick, standard_edge_color] (X1_marginal) to (X5_marginal);

    \draw[->, thick, standard_edge_color] (X1_marginal) to (bottom_mid_marginal);

    \draw[->, thick, standard_edge_color] (X3_marginal) to (X4_marginal);
    \draw[->, thick, standard_edge_color] (X3_marginal) to (X5_marginal);
    \draw[->, thick, standard_edge_color] (X3_marginal) to (bottom_mid_marginal);
    
    \draw[->, thick, standard_edge_color] (U_marginal_new) to (X4_marginal);
    \draw[->, thick, standard_edge_color] (U_marginal_new) to (X5_marginal);
    \draw[->, thick, standard_edge_color] (U_marginal_new) to (bottom_mid_marginal);

 \node (C) [point, transparent, right=1 of U,]{};
    \node (D) [point, transparent, left=1 of U_marginal]{};
    \draw[arrow] (C) to [in=160, out=20] node[pos=0.3, above] {\hspace{0.9cm}\footnotesize Marginalise} (D);
    \end{tikzpicture}
    \caption{Example of step one in the marginalization, taken from \citet{evans2016graphs}.}
    \label{fig:marginalization1}
\end{figure}
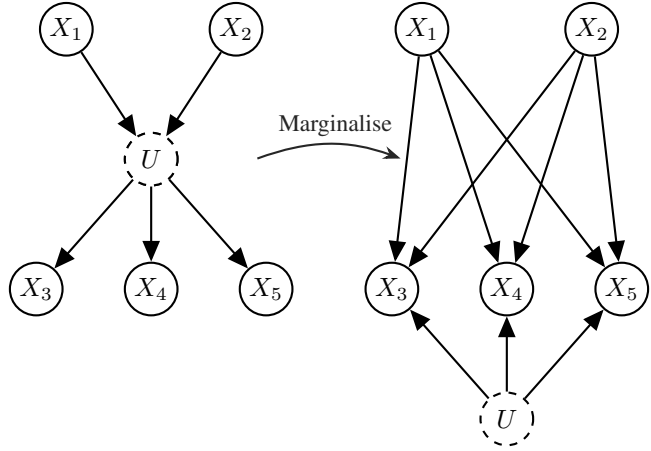

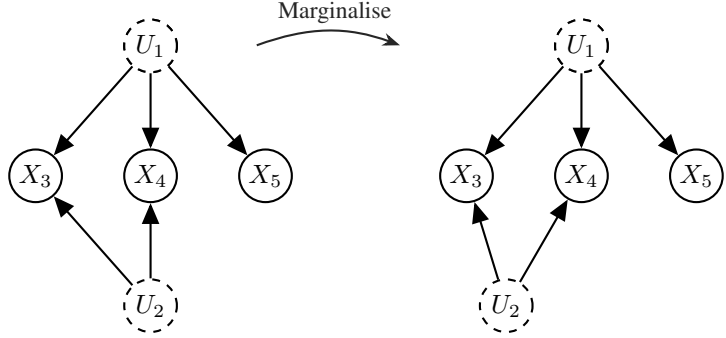
\begin{figure}
    \centering
    \begin{tikzpicture}
    \node (X1) [state, standard_edge_color, fill=standard_node_color,draw=none] {};
    \node (X2) [state, standard_edge_color, fill=standard_node_color, right=0.5*\horizontalsep of X1,draw=none] {};
    \node (X3) [state, standard_edge_color, fill=standard_node_color, right=0.5*\horizontalsep of X2,draw=none] {};
    \node(U1) [state, dashed, standard_edge_color, fill=standard_node_color, below=\verticalsep of X2] {$U_1$};
    \node(U1) [state, dashed, standard_edge_color, fill=standard_node_color, below=\verticalsep of X2] {$U_1$};
    
    \node(U2) [state, dashed, standard_edge_color, fill=standard_node_color, below=\verticalsep of bottom_mid] {$U_2$};
    
    \node(bottom_mid) [state, standard_edge_color, fill=standard_node_color, below=\verticalsep of U1,draw=none] {};
    \node(bottom_mid) [state, standard_edge_color, fill=standard_node_color, below=\verticalsep of U1] {$X_4$};
    \node(X4) [state, standard_edge_color, fill=standard_node_color, left=\horizontalsep of bottom_mid] {$X_3$};
    \node(X5) [state, standard_edge_color, fill=standard_node_color, right=\horizontalsep of bottom_mid] {$X_5$};
    \draw[->, thick, standard_edge_color] (U1) to (X4);
    \draw[->, thick, standard_edge_color] (U1) to (X5);
    \draw[->, thick, standard_edge_color] (U1) to (bottom_mid);

    \draw[->, thick, standard_edge_color] (U2) to (bottom_mid);
    \draw[->, thick, standard_edge_color] (U2) to (X4);

    \node (X1_marginal) [state, standard_edge_color, fill=standard_node_color,draw=none,right = 5*\verticalsep of X1] {};
    \node (X2_marginal) [state, standard_edge_color, fill=standard_node_color, right=0.5*\horizontalsep of X1_marginal,draw=none] {};
    \node (X3_marginal) [state, standard_edge_color, fill=standard_node_color, right=0.5*\horizontalsep of X2_marginal,draw=none] {};
    \node(U1_marginal) [state, dashed, standard_edge_color, fill=standard_node_color, below=\verticalsep of X2_marginal] {$U_1$};
    \node(U1_marginal) [state, dashed, standard_edge_color, fill=standard_node_color, below=\verticalsep of X2_marginal] {$U_1$};
    
    \node(U2_marginal) [state, dashed, standard_edge_color, fill=standard_node_color, below=\verticalsep of bottom_mid_marginal] {$U_2$};
    
    \node(bottom_mid_marginal) [state, standard_edge_color, fill=standard_node_color, below=\verticalsep of U1_marginal,draw=none] {};
    \node(bottom_mid_marginal) [state, standard_edge_color, fill=standard_node_color, below=\verticalsep of U1_marginal] {$X_4$};
    \node(X4_marginal) [state, standard_edge_color, fill=standard_node_color, left=\horizontalsep of bottom_mid_marginal] {$X_3$};
    \node(X5_marginal) [state, standard_edge_color, fill=standard_node_color, right=\horizontalsep of bottom_mid_marginal] {$X_5$};
    
    \draw[->, thick, standard_edge_color] (U1_marginal) to (X4_marginal);
    \draw[->, thick, standard_edge_color] (U1_marginal) to (X5_marginal);
    \draw[->, thick, standard_edge_color] (U1_marginal) to (bottom_mid_marginal);

    \draw[->, thick, standard_edge_color] (U2_marginal) to (bottom_mid_marginal);
    \draw[->, thick, standard_edge_color] (U2_marginal) to (X4_marginal);
    \node (X1_marginal_marginal) [state, standard_edge_color, fill=standard_node_color, right=5*\horizontalsep of X1 ,draw = none] {};
    \node (X2_marginal_marginal) [state, standard_edge_color, fill=standard_node_color, right=0.5*\horizontalsep of X1_marginal_marginal,draw=none] {};
    \node (X3_marginal_marginal) [state, standard_edge_color, fill=standard_node_color, right=0.5*\horizontalsep of X2_marginal_marginal,draw = none] {};
    \node(U_marginal_marginal) [state,  standard_edge_color, fill=standard_node_color, below=\verticalsep of X2_marginal_marginal,draw=none] {$U$};
    \node(U_marginal_marginal) [state, standard_edge_color, fill=standard_node_color, below=\verticalsep of X2_marginal_marginal,draw=none] {};

 \node (C) [point, transparent, right=1 of U,]{};
    \node (D) [point, transparent, left=1 of U_marginal]{};
    \draw[arrow] (C) to [in=160, out=20] node[pos=0.3, above] {\hspace{0.9cm}\footnotesize Marginalise} (D);
    \end{tikzpicture}
    \caption{Example of step two in the marginalization.}
    \label{fig:marginalization2}
\end{figure}

\subsection{Alternative Causal Graphs for Proxy Bias}\label{ap:DAG_examples}

Here we provide the following result demonstrating that a wide variety of graphs can give the same outcome under proxy bias:

\begin{proposition}
    So long as any additional unobserved variables $U^{\prime}$ satisfy the following:
\begin{enumerate}
    \item $U^{\prime}$ does not cause $A$.
    \item There is no direct arrow from $U^{\prime}$ to $\hat{Y}$.
\end{enumerate}
Then marginalizing over $U^{\prime}$ will lead to the same graph as Fig. \ref{fig:proxy_struc}.
\end{proposition}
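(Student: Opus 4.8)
The plan is to prove the result using the latent projection (marginalization) operation of Appendix \ref{ap:marginalization}, following \citet{evans2016graphs}. Concretely, I would begin from the full augmented DAG — the proxy-bias graph of Fig. \ref{fig:proxy_struc} together with the extra unobserved nodes $U'$ and their incident edges — and marginalize out $X$, $U'$, and all remaining latents jointly onto the observed set $O = \{A, Y, \hat{Y}, Y_P\}$. The target is to show that the outcome coincides edge-for-edge with the marginalized graph on the right of Fig. \ref{fig:proxy_struc}. The natural tool is the path characterization of latent projection: a directed edge $V_i \to V_j$ survives among observed nodes exactly when the augmented graph contains a directed path from $V_i$ to $V_j$ all of whose intermediate nodes are latent, and latent confounding between $V_i$ and $V_j$ (an incoming edge from the canonical latent $U$) appears exactly when they share a latent ancestor along a divergent path. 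It therefore suffices to check that inserting $U'$ alters neither of these two families of edges.

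First I would show that no new directed edges appear among the observed variables, using conditions (1), (2), and acyclicity of the DAG. A new directed edge terminating at $A$ would force $A$ to be a descendant of $U'$, which condition (1) forbids, so $A$ stays a root. Any directed path through $U'$ ending at $Y$, $\hat{Y}$, or $Y_P$ must have an observed source among $\{A, Y, \hat Y, Y_P\}$; paths from $A$ merely reproduce the existing $A \to Y$, $A \to \hat{Y}$, $A \to Y_P$, and a path $Y \to \cdots \to U' \to \cdots \to Y_P$ merely reproduces $Y \to Y_P$. The only genuinely new candidate edge is one into $\hat{Y}$: condition (2) removes the direct arrow $U' \to \hat{Y}$, and any indirect route $Y \to U' \to \cdots \to \hat{Y}$ or $Y_P \to U' \to \cdots \to \hat{Y}$ would have to pass through $X$ (already an ancestor of $\hat Y$, $Y$, and $Y_P$), creating a directed cycle and contradicting acyclicity. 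Hence the directed-edge set is unchanged.

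Second I would show that any confounding $U'$ introduces is redundant and is removed by the deletion step of the marginalization. By conditions (1) and (2), the set of observed nodes influenced by $U'$ is contained in $\{Y, Y_P\}$ when $U'$ avoids $X$, and is contained in $\{Y, \hat{Y}, Y_P\}$ when it acts through $X$; in either case it is a subset of the influence set of the canonical latent $U$ produced by marginalizing $X$, since $X$ is a parent of $Y$, $\hat{Y}$, and (in the dashed variant) $Y_P$. Step (2) of the marginalization — deletion of any latent whose influence set is dominated by another's — then removes $U'$, leaving the confounding structure of Fig. \ref{fig:proxy_struc} intact, and combining the unchanged directed and confounding edges gives the graph equality. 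I expect this redundant-latent deletion to be the main obstacle: it is precisely where conditions (1) and (2) must be invoked in full, and where one must verify carefully that $U'$'s influence set is genuinely dominated by $U$'s — in particular that the optional $X \to Y_P$ edge is present whenever $U'$ reaches $Y_P$ — so that the domination hypothesis of step (2) actually applies.
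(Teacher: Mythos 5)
Your proposal is correct and takes essentially the same route as the paper's proof: both run the two-step marginalization operation, arguing first that step one can create no new edges among the observed variables (condition (1) protects $A$, condition (2) together with acyclicity protects $\hat{Y}$, and the remaining pairs in $\{A,Y,Y_P\}$ are already adjacent), and second that $U'$ is removed by the domination/deletion step because its influence set is contained in that of the latent $U$ obtained from marginalizing $X$. Your write-up is if anything more detailed than the paper's --- the explicit cycle-through-$X$ argument for edges into $\hat{Y}$, and the caveat that the dashed $U \to Y_P$ edge must be present for domination to hold (which the paper resolves implicitly by taking that variant of the graph) --- but the underlying argument is identical.
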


\begin{proof}
    To show this we need to demonstrate that once we have performed the marginalization operations, no additional edges or nodes will be added to the graph. We do this step by step:
\begin{enumerate}
    \item This step will add edges if we have two vertices $V,V^{\prime}$ such that $V \to U^{\prime} \to V^{\prime}$. However, if neither of $V,V^{\prime}$ are $\hat{Y}$ then these vertices will already be adjacent in the graph. As the graph is acyclic that means we cannot add any edges.
    \item After removing all edges in step one, we will be left so that $U^{\prime}$ has no parents and affects a subset of vertices in the graph. However, as $U^{\prime}$ does not cause $A$, this must be a subset of $\{\hat{Y},Y_P,Y\}$. As these are the vertices caused by $U$ this will lead to the deletion of $U^{\prime}$.
\end{enumerate}
\end{proof}

\vfill
\pagebreak

\section{Additional Results}\label{ap:additional_results}
\subsection{Proxy Label Results}

\subsubsection{Plots from \citet{fogliato2020fairness} under varying assumptions}\label{ap:plots_fogliato}
\begin{figure}[H]
    \centering
    \includegraphics[scale=0.4]{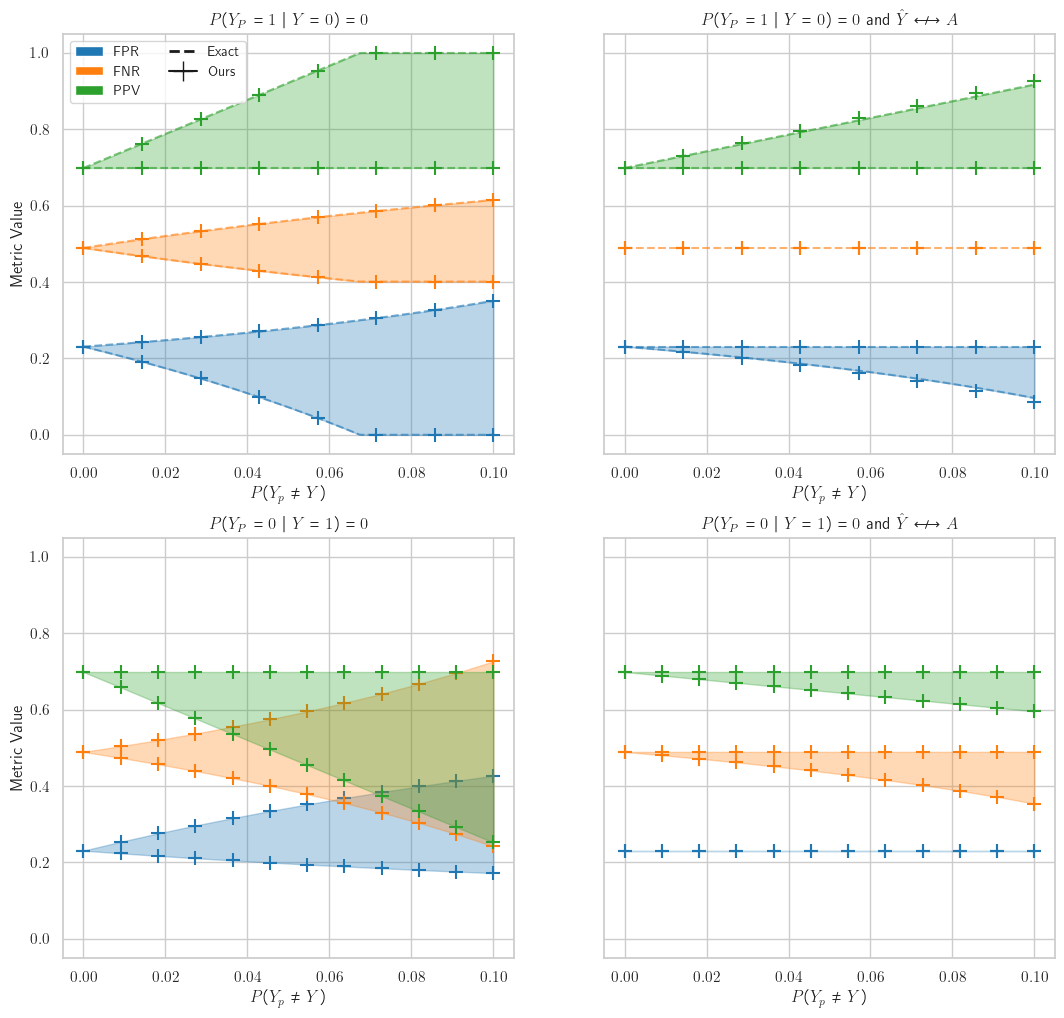}
    \caption{In this plot, we recreate the results from \citet{fogliato2020fairness}, where we are interested in the false positive rate (FPR), false negative rate (FNR), and positive predictive value (PPV) for a classifier trained on the COMPAS dataset. In this plot, we consider varying for which $j$ we have $P(Y_P = 1-j \mid Y=j)$, and we can see that doing so greatly changes the shape of the sensitivity set. Moreover, when we pair these assumptions by dropping of the red dashed edge in Fig. \ref{fig:proxy_struc} we see we can identify some of the metrics of interest under any degree of bias. For $j=1$ we identify the FNR and for $j=0$ we identify the FPR. We prove these identification results in Appendix \ref{ap:proxy_identification_results}}.
    \label{fig:fogliato_varying_assumptions}
\end{figure}

\subsubsection{Proxy Identification Results}\label{ap:proxy_identification_results}
In the setup of \citet{fogliato2020fairness}, the objective is to obtain the false positive/negative rate in a group $A=a$, where it is assumed that $P(Y = 1,Y_P = 0) =  0$. Now declaring the following parameters:  
\begin{align*}
    p_{i j} &= P(Y_P = i, \hat{Y} = j \mid A=a)\\ 
    \alpha_j &=  P(Y = 1,Y_P = 0, \hat{Y} = j \mid A=a) \\
    \alpha &= \alpha_0 + \alpha_1
\end{align*}
Under these assumptions $\alpha_0,\alpha_1$ are sufficient to parameterise the distribution, $P(Y, Y_P, \hat{Y} \mid A=a)$. Now, following \cite{fogliato2020fairness} we have that:
\begin{align*}
    \mathrm{FPR}_{Y} &= \frac{p_{01} - \alpha_1}{p_{00}+p_{01}-\alpha} \\
    \mathrm{FNR}_{Y} &= \frac{p_{10} + \alpha_0}{p_{10}+p_{11}+\alpha} \\ 
    \mathrm{PPV}_{Y} &= \frac{p_{11} + \alpha_1}{p_{01}+p_{11}}
\end{align*}
Now, with the absence of the dashed edge, the DAG in Fig. \ref{fig:proxy_struc} implies the independence $\hat{Y} \perp Y_P \mid Y,A$. Therefore we get the following:
\begin{align*}
    \alpha_j &= P(Y = 1,Y_P = 0, \hat{Y} = j \mid A=a) \\
    &= \frac{P(Y = 1,Y_P = 0 \mid A=a)P(Y = 1, \hat{Y} = j \mid A=a)}{P(Y = 1 \mid A=a)} \\
    &= \frac{\alpha (p_{1j} + \alpha_j)}{p_{10}+p_{11}+\alpha}
\end{align*}
Solving for $\alpha_j$, we get $\alpha_j = \alpha \left( \frac{p_{1j}}{p_{10}+p_{11}} \right)$. Now, inputting this for $\alpha_0$ in the expression for $\mathrm{FNR}_{Y}$ we get:
\begin{align*}
   \mathrm{FNR}_{Y} &= p_{10} \left(\frac{1+ \frac{\alpha}{p_{10}+p_{11}}}{p_{10}+p_{11}+\alpha} \right) \\
   &= \frac{p_{10}}{p_{10}+p_{11}} \\
   &= \mathrm{FNR}_{Y_P}
\end{align*}
Therefore, under the assumptions given, the true false negative rate is identified and equal to the observed false negative rate on the proxy labels. Inputting the value for $\alpha_1$ into $\mathrm{FPR}_{Y}$ we instead get:
\begin{align*}
    \mathrm{FPR}_{Y} = \frac{p_{01} - \alpha \left(\frac{p_{10}}{p_{10}+p_{11}} \right)} {(p_{00}+p_{01}-\alpha)}
\end{align*}
As this is a decreasing function of $\alpha$ we can see that for $\alpha \leq \alpha_0$, $\mathrm{FPR}_{Y}$ is bounded as:
\begin{align*}
    \frac{p_{01}} {(p_{00}+p_{01})} \leq \mathrm{FPR}_{Y} \leq \frac{p_{01} - \alpha \left(\frac{p_{10}}{p_{10}+p_{11}} \right)} {(p_{00}+p_{01}-\alpha)}
\end{align*}
For PPV, we again input $\alpha_1$ to give:
\begin{align*}
    \mathrm{PPV}_{Y} &= \frac{p_{11} + \alpha \left( \frac{p_{11}}{p_{10}+p_{11}}\right) }{p_{01}+p_{11}}
\end{align*}
Leading to the bounds:
\begin{align*}
    \mathrm{PPV}_{Y_P} \leq \mathrm{PPV}_{Y} \leq \frac{p_{11} + \alpha \left( \frac{p_{11}}{p_{10}+p_{11}}\right) }{p_{01}+p_{11}}
\end{align*}

The statements for the identification of the false positive rate and false negative rate are as follows:

\begin{proposition}\label{prop:FNR_identified}
    Suppose we have $P(Y_P=1 \mid Y=0) = 0$. Then under the conditional independence statement $\hat{Y} \perp Y_P \mid Y,A$, for all level of proxy bias $P(Y_P \neq Y)$:
\begin{align*}
    \mathrm{FNR}_{Y\mid A=a} = \mathrm{FNR}_{Y_P\mid A=a}
\end{align*}
Where $\mathrm{FNR}_{Y\mid A=a}$ is the true false negative rate for the group $A=a$ and $\mathrm{FNR}_{Y_P\mid A=a}$ is the proxied false negative rate.
\begin{proof}
    Follows from the above derivations.
\end{proof}
\end{proposition}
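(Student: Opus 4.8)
The plan is to prove the identity by writing both false negative rates as explicit ratios of probabilities over the observed variables $(Y_P,\hat{Y})$ together with a single unobserved ``correction'' mass, and then showing that the conditional independence forces that mass to distribute across $\hat Y$ in exactly the proportion needed for the two ratios to coincide. I would adopt the parametrization $p_{ij}=P(Y_P=i,\hat Y=j\mid A=a)$ and $\alpha_j=P(Y=1,Y_P=0,\hat Y=j\mid A=a)$, with $\alpha=\alpha_0+\alpha_1$, already introduced above. The key structural consequence of the hypothesis $P(Y_P=1\mid Y=0)=0$ is that $Y_P=1$ implies $Y=1$, so the \emph{only} place where $Y$ and $Y_P$ disagree is the event $\{Y=1,Y_P=0\}$, whose mass split by $\hat Y$ is precisely $\alpha_0,\alpha_1$.

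First I would record the two target quantities in this notation. Using the law of total probability together with $Y_P=1\Rightarrow Y=1$, the numerators and denominators split cleanly, giving
\begin{align*}
  \mathrm{FNR}_{Y\mid A=a}=\frac{p_{10}+\alpha_0}{p_{10}+p_{11}+\alpha}, \qquad \mathrm{FNR}_{Y_P\mid A=a}=\frac{p_{10}}{p_{10}+p_{11}}.
\end{align*}
The claim therefore reduces to showing $\alpha_0=\alpha\,p_{10}/(p_{10}+p_{11})$, i.e.\ that the hidden positives are allocated between $\hat Y=0$ and $\hat Y=1$ in the same ratio as the revealed positives $Y_P=1$.

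Second, I would bring in the conditional independence $\hat Y\perp Y_P\mid Y,A$. Conditioning on $Y=1,A=a$ and factorizing yields $\alpha_j = \alpha\,(p_{1j}+\alpha_j)/(p_{10}+p_{11}+\alpha)$, where the appearance of $p_{1j}+\alpha_j$ reflects that $P(Y=1,\hat Y=j\mid A=a)$ collects contributions from both the revealed ($p_{1j}$) and the hidden ($\alpha_j$) positives. The subtlety here, which I expect to be the main obstacle, is that $\alpha_j$ is defined self-referentially through this relation, so one must treat it as a linear equation in $\alpha_j$ rather than read off a closed form; solving it cancels the $\alpha\alpha_j$ terms and gives exactly $\alpha_j=\alpha\,p_{1j}/(p_{10}+p_{11})$.

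Finally, substituting $\alpha_0=\alpha\,p_{10}/(p_{10}+p_{11})$ into the first ratio, the factor $1+\alpha/(p_{10}+p_{11})$ pulls out of the numerator $p_{10}+\alpha_0$ and cancels the denominator $p_{10}+p_{11}+\alpha$, collapsing $\mathrm{FNR}_{Y\mid A=a}$ to $p_{10}/(p_{10}+p_{11})=\mathrm{FNR}_{Y_P\mid A=a}$. Since $\alpha$ drops out entirely, the identity holds for every value of $P(Y_P\neq Y)$, which is the ``for all levels of proxy bias'' part of the statement. The only genuine content beyond bookkeeping is the allocation identity for $\alpha_0$, so I would present that derivation carefully and emphasize that it is exactly what the independence assumption buys.
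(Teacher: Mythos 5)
Your proposal is correct and follows essentially the same route as the paper's own derivation: the identical parametrization $p_{ij}$, $\alpha_j$, $\alpha$, the same conditional-independence factorization giving $\alpha_j = \alpha\,(p_{1j}+\alpha_j)/(p_{10}+p_{11}+\alpha)$, the same linear solve yielding $\alpha_j = \alpha\, p_{1j}/(p_{10}+p_{11})$, and the same final cancellation showing $\mathrm{FNR}_{Y\mid A=a} = p_{10}/(p_{10}+p_{11}) = \mathrm{FNR}_{Y_P\mid A=a}$ independently of $\alpha$. Your added observation that the self-referential equation must be solved as a linear equation in $\alpha_j$ (rather than read off directly) is exactly the step the paper performs implicitly when it says ``solving for $\alpha_j$.''
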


Now the equivalent statement for the false positive ratio:

\begin{proposition}
    Suppose we have $P(Y_P=0 \mid Y=1) = 0$. Then under the conditional independence statement $\hat{Y} \perp Y_P \mid Y,A$, for all level of proxy bias $P(Y_P \neq Y)$:
\begin{align*}
    \mathrm{FPR}_{Y\mid A=a} = \mathrm{FPR}_{Y_P\mid A=a}
\end{align*}
Where $\mathrm{FPR}_{Y\mid A=a}$ is the true false negative rate for the group $A=a$ and $\mathrm{FPR}_{Y_P\mid A=a}$ is the proxied false negative rate.

\begin{proof}
    This follows from considering the distribution where $Y,Y_P$ and $\hat{Y}$ are all flipped as any statement about the false positive rate in the original distribution translates to a statement about the false negative rate in the flipped distribution. The assumption $P(Y_P=0 \mid Y=1)$ in the original distribution translates to $P(Y_P=1 \mid Y=0)$ in the flipped distribution, whereas all other assumptions are symmetric to the flipping operation. Therefore we can apply proposition \ref{prop:FNR_identified} to see that the flipped FNR is constant under any degree of proxy noise. This leads us to conclude that under these assumptions the FPR in the original distribution must also be constant under any degree of proxy noise.
\end{proof}
\end{proposition}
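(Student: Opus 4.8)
The plan is to exploit the symmetry between false positives and false negatives under relabeling of the binary outcomes, reducing the claim to the already-established false negative rate identification in Proposition \ref{prop:FNR_identified}. First I would introduce the ``flipped'' random variables $Y' = 1-Y$, $Y_P' = 1-Y_P$, and $\hat{Y}' = 1-\hat{Y}$, leaving $A$ unchanged. Since this is a deterministic, bijective relabeling of each binary variable's values, it induces a new joint distribution (equivalently, an SCM on the same DAG with relabeled value sets), so every probabilistic statement about the original variables has an exact counterpart for the flipped ones.

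Next I would check that the quantity of interest maps correctly. Because $\hat{Y}=1 \iff \hat{Y}'=0$ and $Y=0 \iff Y'=1$, we have $\mathrm{FPR}_{Y\mid A=a} = P(\hat{Y}=1 \mid Y=0, A=a) = P(\hat{Y}'=0 \mid Y'=1, A=a) = \mathrm{FNR}_{Y'\mid A=a}$, and likewise $\mathrm{FPR}_{Y_P \mid A=a} = \mathrm{FNR}_{Y_P' \mid A=a}$. Hence establishing $\mathrm{FPR}_{Y\mid A=a} = \mathrm{FPR}_{Y_P\mid A=a}$ in the original distribution is equivalent to establishing $\mathrm{FNR}_{Y'\mid A=a} = \mathrm{FNR}_{Y_P'\mid A=a}$ in the flipped one.

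Then I would verify that the hypotheses of Proposition \ref{prop:FNR_identified} hold for the flipped distribution. The structural assumption $P(Y_P=0 \mid Y=1)=0$ becomes $P(Y_P'=1 \mid Y'=0)=0$, exactly the zero-probability hypothesis required there. The conditional independence $\hat{Y} \perp Y_P \mid Y, A$ is preserved under componentwise bijective relabeling, yielding $\hat{Y}' \perp Y_P' \mid Y', A$. And the sensitivity parameter is invariant, since the events coincide, $\{Y_P \neq Y\} = \{Y_P' \neq Y'\}$, so $P(Y_P' \neq Y') = P(Y_P \neq Y)$ and the phrase ``for all levels of proxy bias'' transfers unchanged.

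Finally I would apply Proposition \ref{prop:FNR_identified} to the flipped distribution to conclude $\mathrm{FNR}_{Y'\mid A=a} = \mathrm{FNR}_{Y_P'\mid A=a}$ at every level of proxy bias, and translate this back through the identities of the second step to recover $\mathrm{FPR}_{Y\mid A=a} = \mathrm{FPR}_{Y_P\mid A=a}$. The only genuine care point—what I would call the main obstacle, though it is a light one—is the bookkeeping in the third step: confirming that each of the three ingredients (the structural zero, the conditional independence, and the sensitivity parameter) really is symmetric under the flip, so that the earlier proposition applies verbatim rather than merely by analogy. Once that is checked, no new estimation or algebra is required.
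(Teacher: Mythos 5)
Your proposal is correct and follows essentially the same route as the paper's own proof: flip $Y$, $Y_P$, and $\hat{Y}$, observe that the FPR in the original distribution becomes the FNR in the flipped one while all hypotheses (the structural zero, the conditional independence, and the proxy-bias level) transfer, and then invoke Proposition~\ref{prop:FNR_identified}. Your write-up is in fact more careful than the paper's, since you explicitly verify the invariance of each assumption rather than asserting symmetry.
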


\subsection{Selection Results}\label{ap:selection_results}

\subsubsection{Selective labels under MNAR}\label{ap:selection_set-up}

Here we include an experiment applying the framework to selective labels under the missing not at random assumption (MNAR)\cite{rubin1976inference}. This supposes that we only see the outcome on a subset of the full dataset, with the outcome on the rest of the dataset free to vary arbitrarily. We work with the Dutch census dataset \citet{van20012001}, first fitting an unconstrained logistic regression, then forming the selected population as those who have a predicted probability higher than $0.3$. 

Once we have formed the selected subset, we then train four classifiers, each to satisfy a different parity metric. We train to false negative rate parity, false positive rate parity, positive predictive parity, and negative predictive parity. False negative/positive rate parity are trained using the reductions approach \cite{agarwal2018reductions}, whereas for positive predictive parity and negative predictive parity, we train 100 predictors, each weighting different parts of the distribution, taking the one with the lowest parity score above a given accuracy threshold. The plots are shown in Fig. \ref{fig:selective_labels}.

\begin{figure}[ht]
    \centering
\includegraphics[scale=0.5]{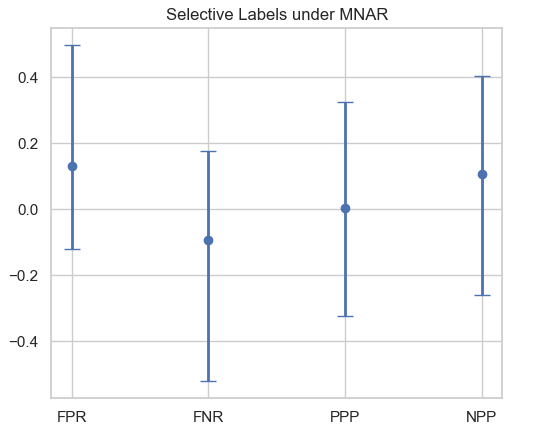}
    \caption{This plot demonstrates a sensitivity analysis for selective labels on the Dutch dataset under the missing, not at random assumption. }
    \label{fig:selective_labels}
\end{figure}

\subsubsection{Selection and Proxy Plots}\label{ap:selection_and_proxy_plots}
\begin{figure}[ht]
    \centering
    \includegraphics[scale=0.4]{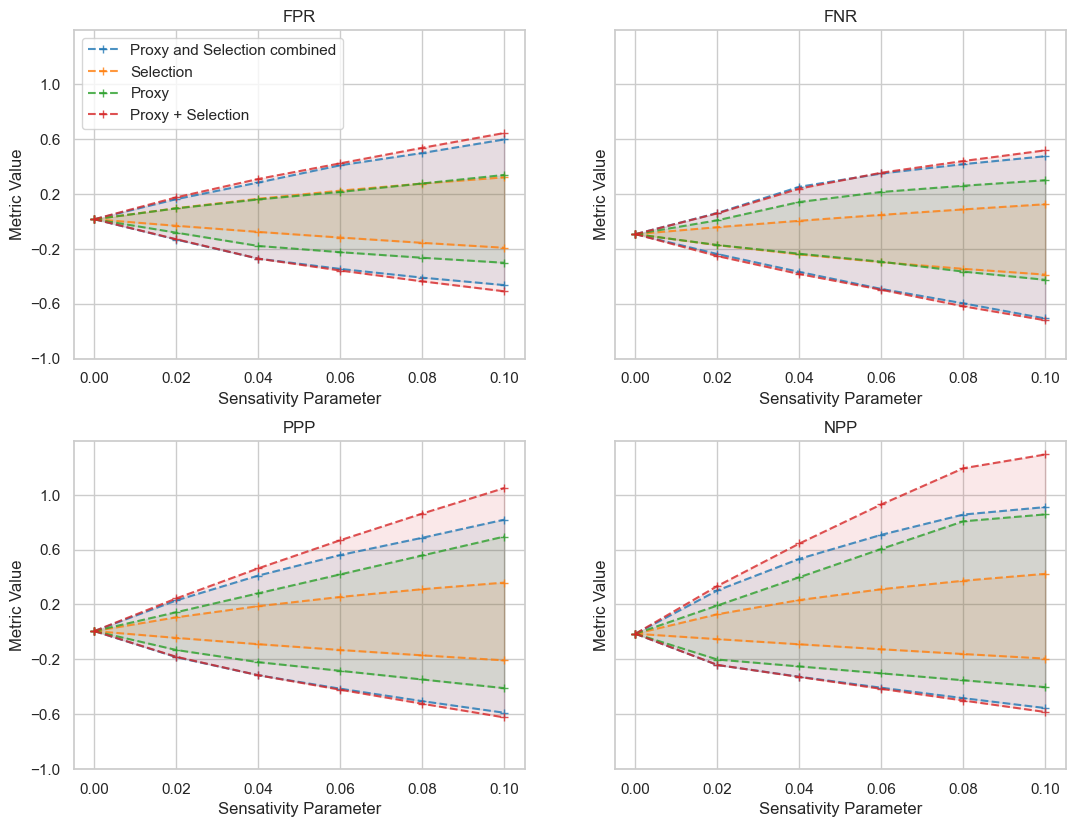}
    \caption{In these plots, we can see the effect of doing a sensitivity analysis jointly for selection and proxy bias. We can see that for the false positive rate parity (FPR) and false negative rate parity (FNR) the combined biases behave roughly as the sum of both biases, however, for positive predictive parity (PPP) and negative predictive parity (NPP) the combination behaves differently with the combined bias amounting to a smaller possible range for the metrics than the sum of the range of both biases individually.  }.
    \label{fig:joint_bias}
\end{figure}
Here we demonstrate the effect of selection and proxy bias jointly on the Adult dataset. We include the results in Fig. \ref{fig:joint_bias}, which show that the occurrence of multiple biases acts differently for different parity metrics.

\subsection{ECP bias results}\label{ac:ecp}

\subsubsection{ECP experimental set up}\label{ap:ecp_set_up}

For this experiment, we focus on finding the possible ranges for the counterfactual parity metrics from the given observational statistics. We use the sensitivity parameter of $P(Y(1) \neq Y(0))$, adding additional causal assumptions such as monotonicity ( $Y(1) \geq Y(0)$ ) and if the policy is observed or not. When simulating the policy, we draw $\mathrm{ECP} \sim \mathrm{Ber}(\frac{1}{2}+ c*A)$ for $c=0.2$ to skew the policy in one direction. Results show in Fig. \ref{fig:ECP_plots}

\begin{figure}[ht]
    \centering
    \includegraphics[scale=0.4]{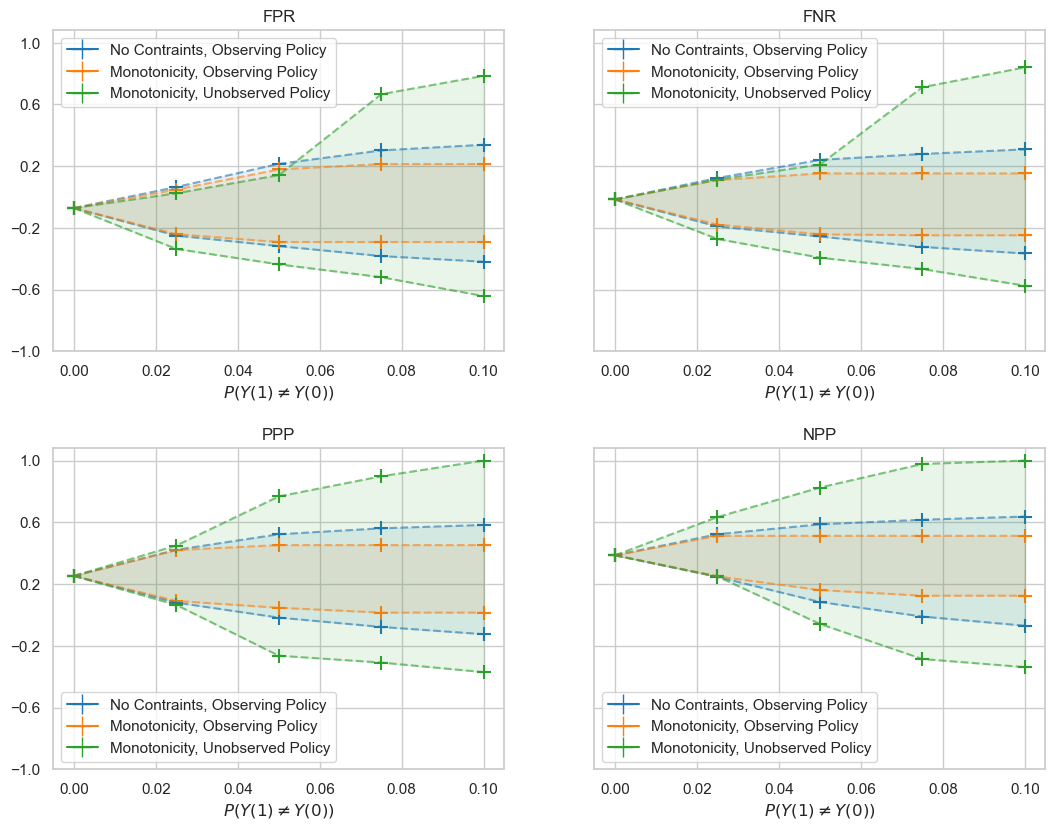}
    \caption{In these plots, we perform a sensitivity analysis for the value of the counterfactual parity metrics given in Appendix \ref{ap:CF_parity_metrics}. In each case, we work under 3 differing levels of assumption and information }
    \label{fig:ECP_plots}
\end{figure}

\subsection{Causal Fairness Experiments}\label{ap:causal_fairness}

In this section, we deliver some results on applying our sensitivity analysis framework to causal fairness metrics of the variety detailed in \cite{plecko2022causal}. Before doing so, we add some technical comments on these types of interventions in FairML and some nuances of measurement bias in the context of causal inference. 

Firstly, we would like to note that our framework can still be applied to perform sensitivity analysis for measurement bias for other fairness metrics \emph{without} having to consider counterfactuals relative to protected characteristics such as race or gender, thereby avoiding difficulties with intervention on such traits \citep{kohler2018eddie,hu2020s,kasirzadeh2021use}. In this case, $A$ could be seen as denoting membership to a group and indexing different graphs for each group as in \citet{bright2016causally}. In this case, the arrows leading from $A$ would only express conditional independence relationships as opposed to causal ones. Notably, in the graphs, we suggest they are unconstrained. 

Secondly, measurement biases and, specifically, selection bias in causal fairness entail additional complications. This is because almost always, membership in such a dataset is causally downstream of the protected attribute, meaning that when conditioning on individual presence in a dataset, we are introducing selection bias in some form. As \citet{fawkes2022selection} argue, this means that DAG models will be unable to correctly capture the causal structure in most datasets we come across in FairML. Failing to account for such effects can lead to erroneous causal conclusions. 

Having said this, we will proceed with applying the causal graphs in Fig. \ref{fig:all_dags} to do causal fairness analysis for the following metrics:

\paragraph{Counterfactual Fairness (CF) \cite{kusner2017counterfactual}}
We measure this as $P_{\cC}(\hat{Y}(A=1) \neq \hat{Y}(A=0))$ which is equal to $0$ exactly when $\hat{Y}$ is counterfactually fair \cite{fawkes2023results}.

\paragraph{Total Effect (TE) \cite{plecko2022causal}} Measured as $P_{\cC}(\hat{Y}(A=1)) - P_{\cC}(\hat{Y}(A=0))$.

\paragraph{Spurious Effect (SE) \cite{plecko2022causal}} Measured as $P_{\cC}(\hat{Y}(A=a)) - P_{\cC}(\hat{Y} \mid A=a)$.

Results are shown in Fig. \ref{fig:causal_fairness}, where we have assumed that counterfactual fairness is identified at a particular value. We can see that all causal fairness metrics recover a linear relationship under selection in this context. 

\begin{figure}[ht]
    \centering
\includegraphics[scale=0.5]{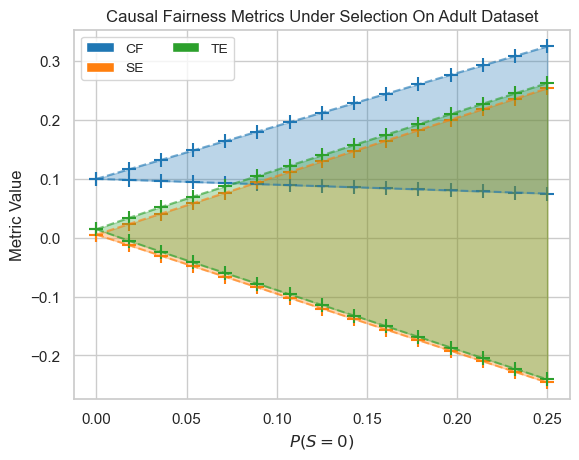}
    \caption{Causal fairness metrics under selection. We show plots for Counterfactual Fairness (CF), Total Effect (TE), and Spurious Effect (SE) using graph \ref{fig:selec_dag} where we have additionally assumed that counterfactual fairness is the point identified. }
    \label{fig:causal_fairness}
\end{figure}

\section{Details of cross dataset bias analysis}\label{ap:cross_dataset_analysis}
In this section, we analyze the datasets presented in \citet{le2022survey} for the three biases we present in Section  \ref{sec:meas_bias}. We describe each dataset, describe the task that most closely relates to the use of this dataset, and, relative to this task, we describe the measurement biases present. For each bias, we justify our decision. 

\begin{table}[]
    \centering
    \begin{tabular}{c|c|c|c|c}
        Dataset & Task & Proxy Bias & Selection Bias & ECP Bias \\\hline
        Adult & Synthetic & & &   \\
        KDD Census-Income & Synthetic & & &  \\
        German credit & Credit risk & & \ding{51} & \ding{51} \\
        Dutch census  & Synthetic & & &  \\
        Bank marketing & Client & & \ding{51} &  \\
        Credit card clients & Default Risk & &\ding{51} & \ding{51} \\
        COMPAS recid. & Risk prediction & \ding{51} & \ding{51} & \ding{51} \\
        COMPAS viol. recid. & Risk prediction & \ding{51} & \ding{51} & \ding{51} \\
        Communities\&Crime & Neighborhood risk & \ding{51} & \ding{51} & \ding{51} \\
        Diabetes & Re-admission risk& \ding{51}& &\ding{51}  \\
        Ricci & Promotion Prediction & \ding{51} & \ding{51} & \ding{51} \\
        Student-Mathematics  & Admissions & \ding{51} & \ding{51} & \ding{51}  \\
        Student-Portuguese & Admissions & \ding{51} & \ding{51} & \ding{51} \\
        OULAD & Admissions & \ding{51} & \ding{51} & \ding{51}  \\
        Law School & Admissions & \ding{51} & \ding{51} & \ding{51} 
    \end{tabular}
    \caption{Analysis of the datasets from \citet{le2022survey}, split by task. The explanation for the biases are given in Appendix \ref{ap:cross_dataset_analysis}. }
    \label{tab:cross_dataset_ap}
\end{table}

\paragraph{Synthetic tasks} Synthetic tasks are difficult to discuss, since biases are contextual and these tasks are purely theoretical. Given a downstream task, they may or might not exhibit the biases we discuss. We therefore drop them from the analysis.

\paragraph{Bank Marketing Dataset} The goal here is to target current clients for the bank to open more accounts. Since the outcome, in this case, is exactly what the bank seeks to maximize, this dataset does not exhibit proxy or ECP bias. However, contacts were made via phone, so there is selection bias in whether customers answered the phone.

\paragraph{German credit and Credit card clients} For both of these datasets, the goal is to predict whether customers face default risk. The aim is to use this to decide if applying customers presents a risk to the bank or not. As a result, there will be selection bias, since defaults are only observed for a firms’ prior customers. Finally, as with the example in the main text, this exhibits ECP bias, since the firm sets the credit limit, which impacts likelihood of default.

\paragraph{COMPAS recid. and COMPAS viol. recid. and Communities and Crime} Datasets built off COMPAS have been well documented to exhibit all these biases and more \cite{bao2021s}. Such issues are not unique to COMPAS, and are exhibited in all recidivism and crime prediction datasets. We see similar issues in the Communities and Crime dataset, where the aim is to predict the number of historical crimes per hundred thousand population for a number of cities. Both under-reporting of crimes and over-criminalization render the per-capita crime estimates proxy variables. Due to controversy over the reporting of rape statistics many midwestern communities were excluded, leading to sampling bias. Finally, police practices affect the likelihood of a crime being reported, and police often act in discriminatory ways, so ECP bias seems likely to be a problem as well.

\paragraph{Diabetes} For this dataset, the goal is to predict if a patient will be readmitted in the next 30 days. The aim is to use this to assess patient health risk upon leaving the hospital, to determine if they should be discharged. The population is a sample of the patient pool, and so there should not be selection bias. Readmissions differ from the underlying recurring illness, so this does represent a proxy, albeit a fairly reasonable one. In this case, ECP bias is a cause for concern due to the differences in quality of care by demographic group \citep{egede2006race}.

\paragraph{Ricci} The Ricci dataset is an employment dataset, where the goal is to predict the likelihood of a promotion based on a selection of available covariates. A model trained on this data would then be used to predict the potential of applying candidates in order to decide if they are invited to interview or do additional tests. This application would fall prey to all biases we have presented and, as such, strong justification would be required as to the usefulness of the model. Going through biases one by one, proxy bias is exhibited in a similar way to the example presented in the main text, selection bias is present as the model is evaluated on a different population to the one it is trained on and, finally, the firm’s policies will have an impact on who succeeds and is promoted at the company. 

\paragraph{Admissions datasets} 
The final datasets can all be grouped under admissions to academic institutions. Similarly to the employment example, these are prone to exhibit all the biases we have outlined. This is due to the challenges of having a perfectly objective measure of performance, deployment of models on applying populations but fit on accepted populations, and universities’ policies affecting the success of students. Therefore, when using these predictors, arguments should be made as to why using such a measure would not induce demographic skew. 

\section{Details of cross-dataset experiment}\label{ap:tradeoff}

For this experiment, we train numerous predictors across a variety of common fairness benchmark datasets \cite{le2022survey} to satisfy parity constraints. For each dataset we train 18 classifiers total, where the ML model is one of logistic regression, naïve Bayes, and a decision tree and the parity constraint is false negative rate parity, false positive rate parity, positive predictive parity and negative predictive parity, demographic parity, and equalized odds. With the exception of positive/negative predictive parity, we train all classifiers to satisfy these constraints using the reductions approach \cite{agarwal2018reductions}. For positive/negative predictive parity, we train 100 predictors, each weighting different parts of the distribution, taking the one with the lowest parity score above a given accuracy threshold. We vary the sensitivity parameter over a range of realistic values for many real-world settings, computing the sensitivity bounds for each level of the parameter. We find that, except for demographic parity, all parity measures we evaluate exhibit significant sensitivity over these parameter ranges. This makes it hard to understand what satisfying, e.g., equalised odds, means on a given dataset. The caveat is that equalised odds is only satisfied as long as there are no significant measurement biases in the underlying data, which is almost never the case in FairML audits.

\subsection{Analysis of Results}

\subsubsection{Correlational Plots}

Here we explore how sensitivity varies according to class imbalance in either $A$ or $Y$. We find that for some metrics (Negative Predictive Parity) class imbalance seems to make little difference to the sensitivity of metrics, with next to no correlation observed between imbalance and sensitivity. This sits in contrast to other metrics (Positive Predictive Parity) where we can see a clear, positive, correlation between class imbalance and sensitivity. 
\begin{figure}[ht]
    \centering
\includegraphics[width = 0.9\textwidth]{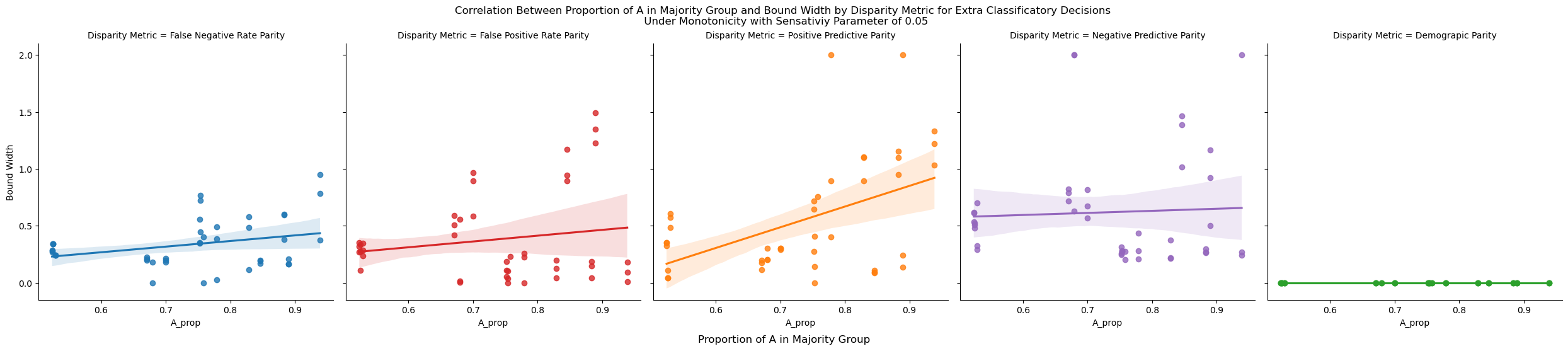}
\includegraphics[width = 0.9\textwidth]{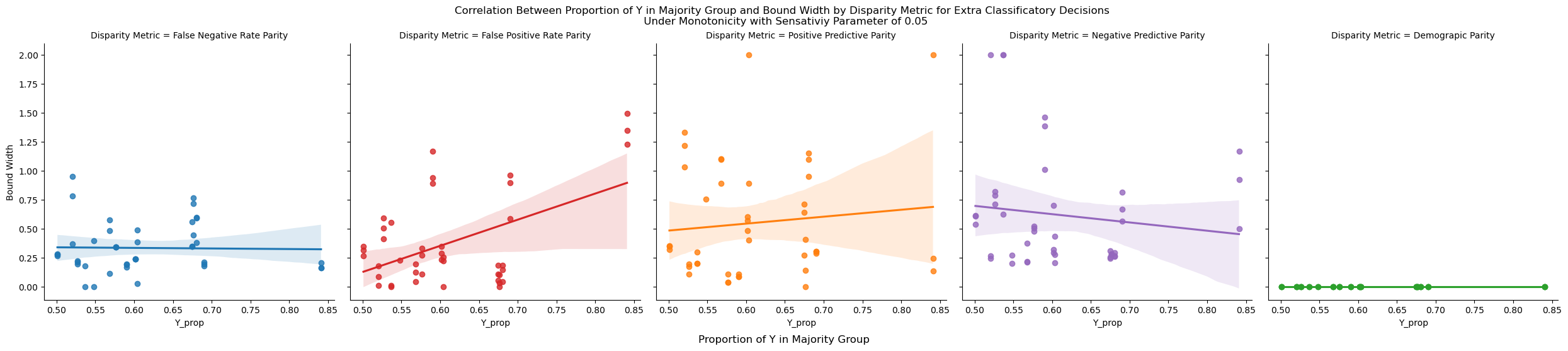}
\includegraphics[width = 0.9\textwidth]{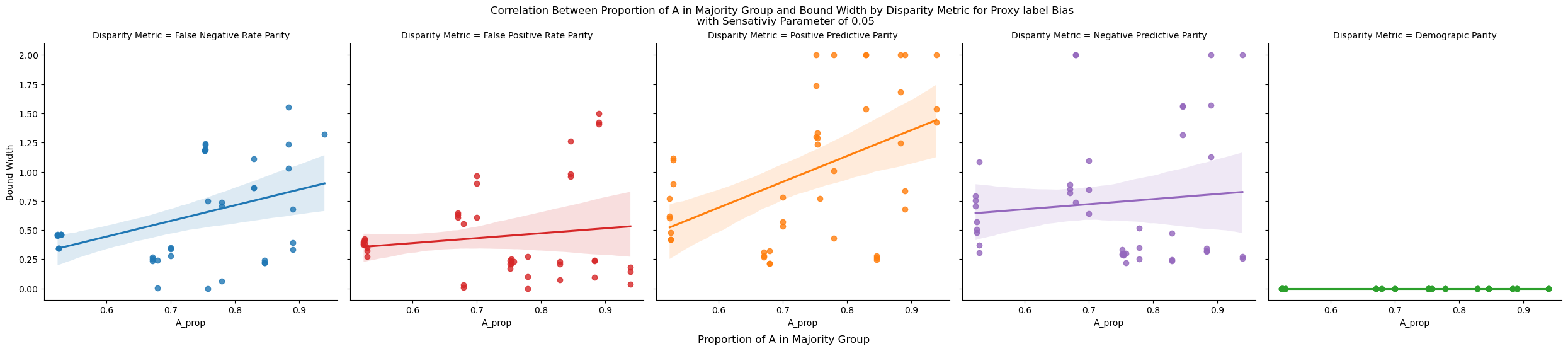}
    \centering
\includegraphics[width = 0.9\textwidth]{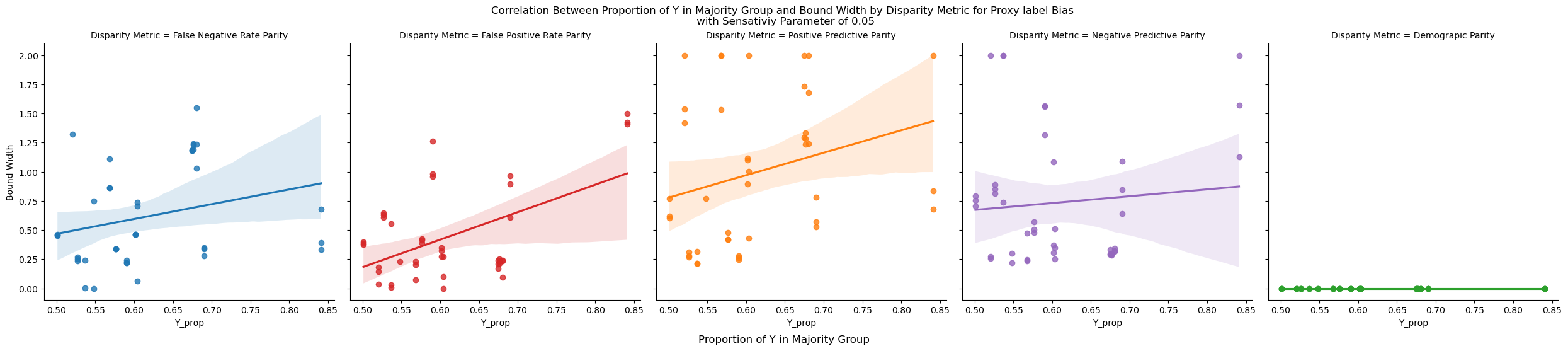}
\includegraphics[width = 0.9\textwidth]{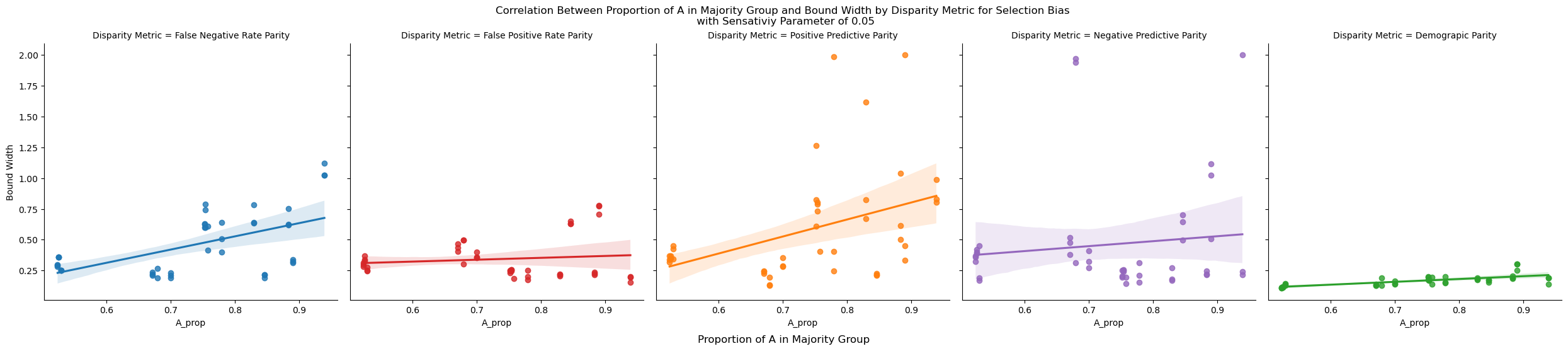}
\includegraphics[width = 0.9\textwidth]{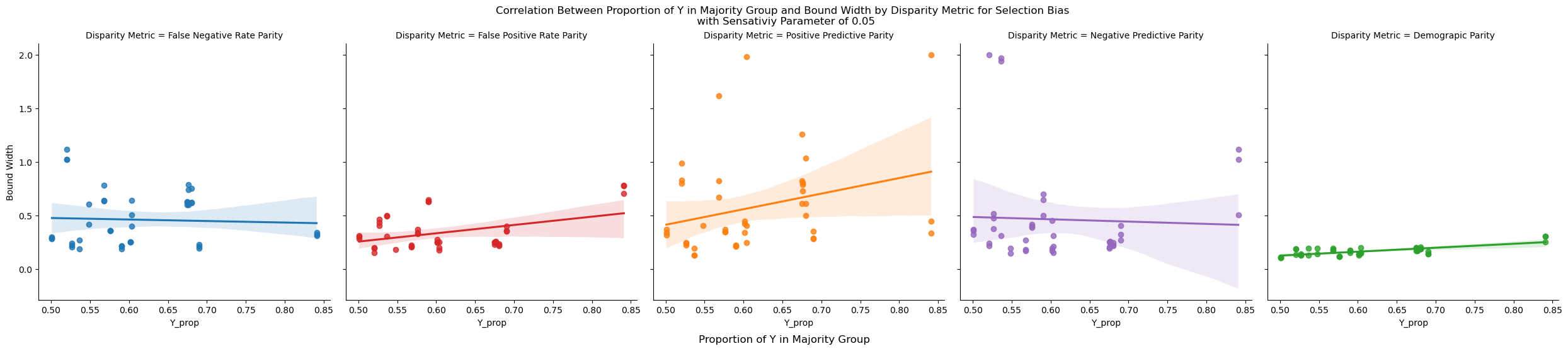}
    \label{fig:corr_plots}
\end{figure}

\clearpage

\subsubsection{Cross-Dataset Analysis}

These results reveal some heterogeneity across datasets. While we do not find that parity metrics adhere to a universal ordering in terms of robustness against measurement bias, we find that these metrics can be generally grouped together with respect to their order of complexity, and that fragility to bias scales with this complexity. In particular, on the Adult New, Adult, Bank, COMPAS Recid/Viol, Credit, and Dutch datasets, we observe that positive predictive parity is a standout fragile method across biases, followed by false negative rate parity and negative predictive parity. On the German Credit, Law, and Ricci, Student mat/por we see NPP and FNRP tend to be the worst, followed by FPRP and PPP. 

\begin{figure}[H]
    \centering
\includegraphics[width = \textwidth]{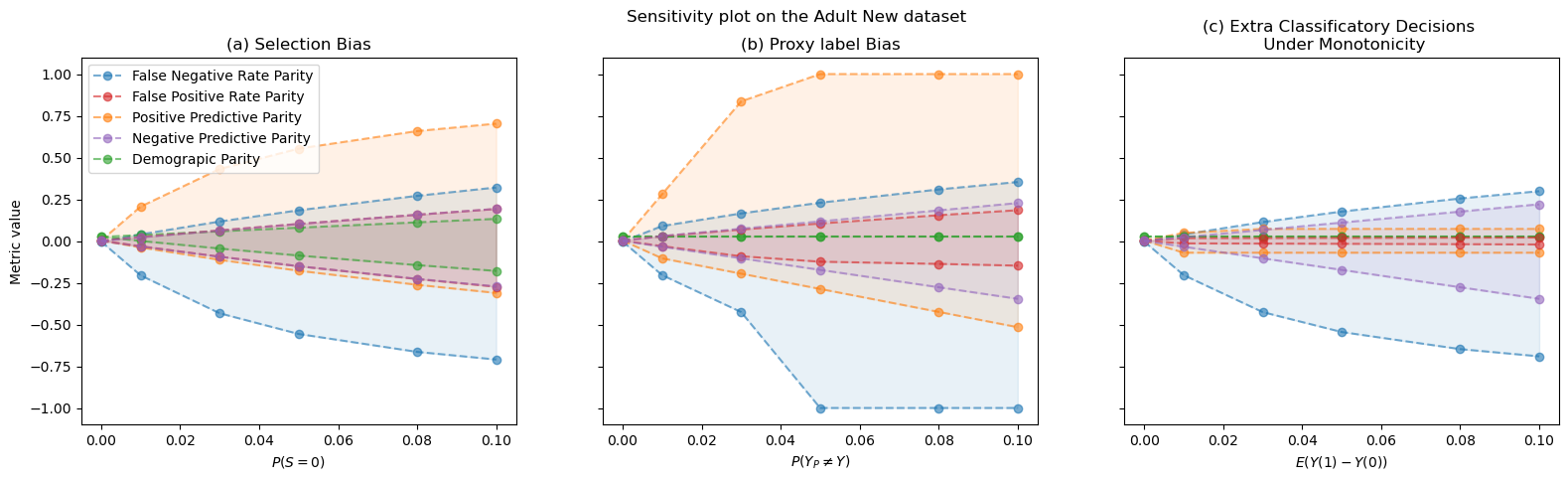}
\includegraphics[width = \textwidth]{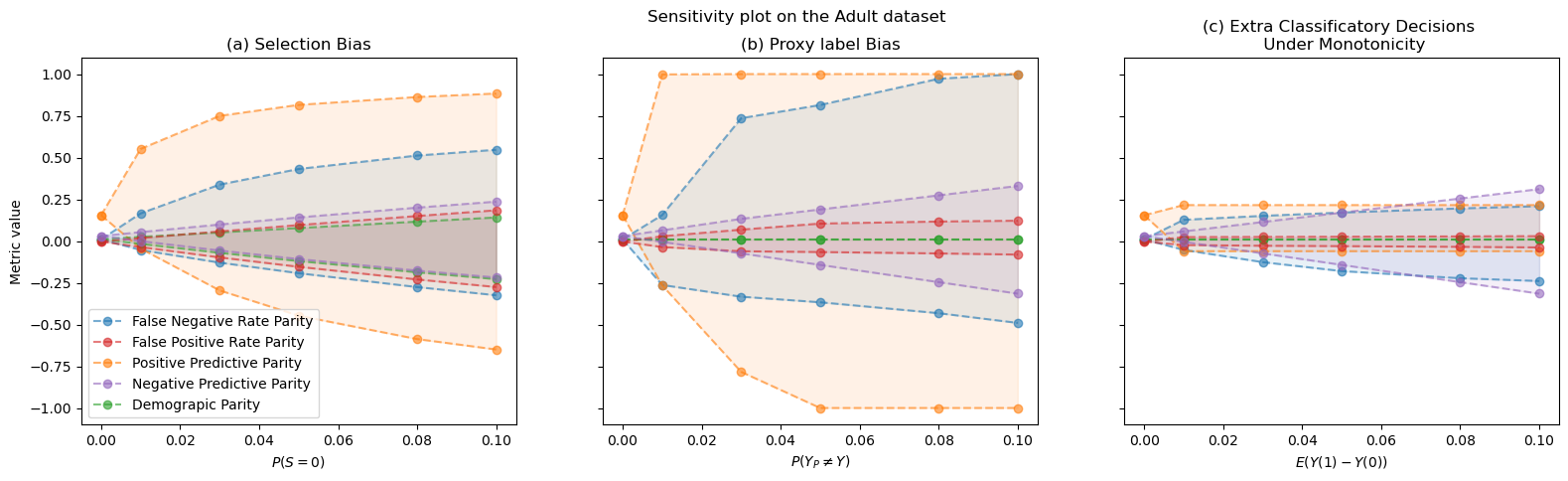}
\includegraphics[width = \textwidth]{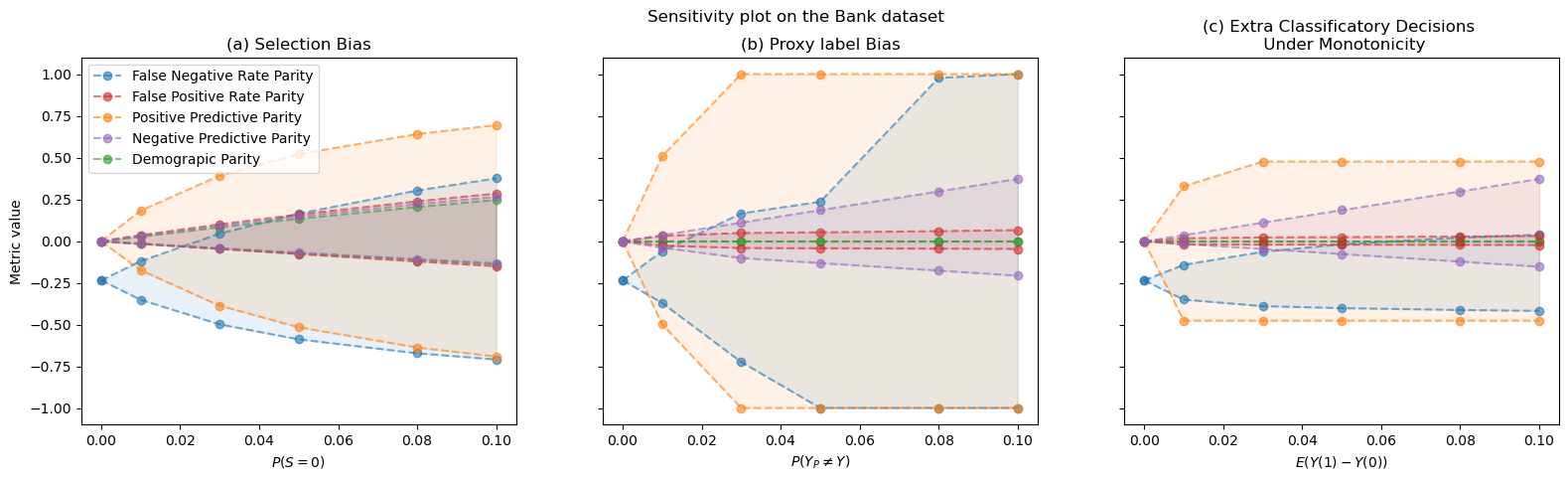}
\includegraphics[width = \textwidth]{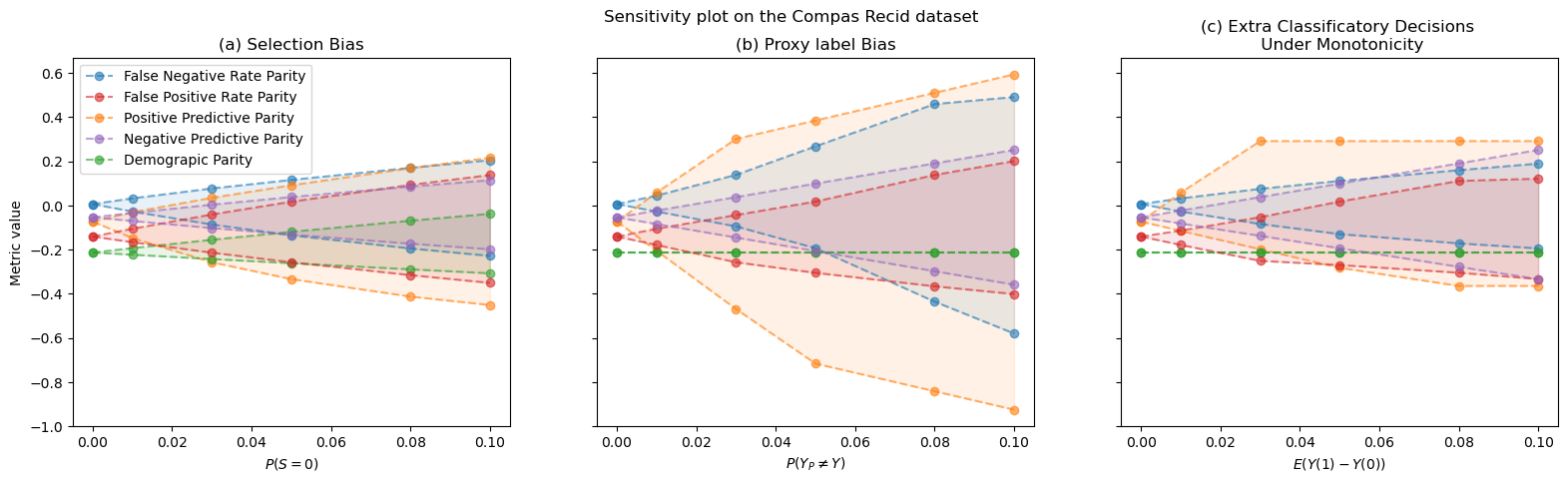}
    \label{fig:cross_dataset}
\end{figure}

\pagebreak

\begin{figure}[H]
    \centering
\includegraphics[width = \textwidth]{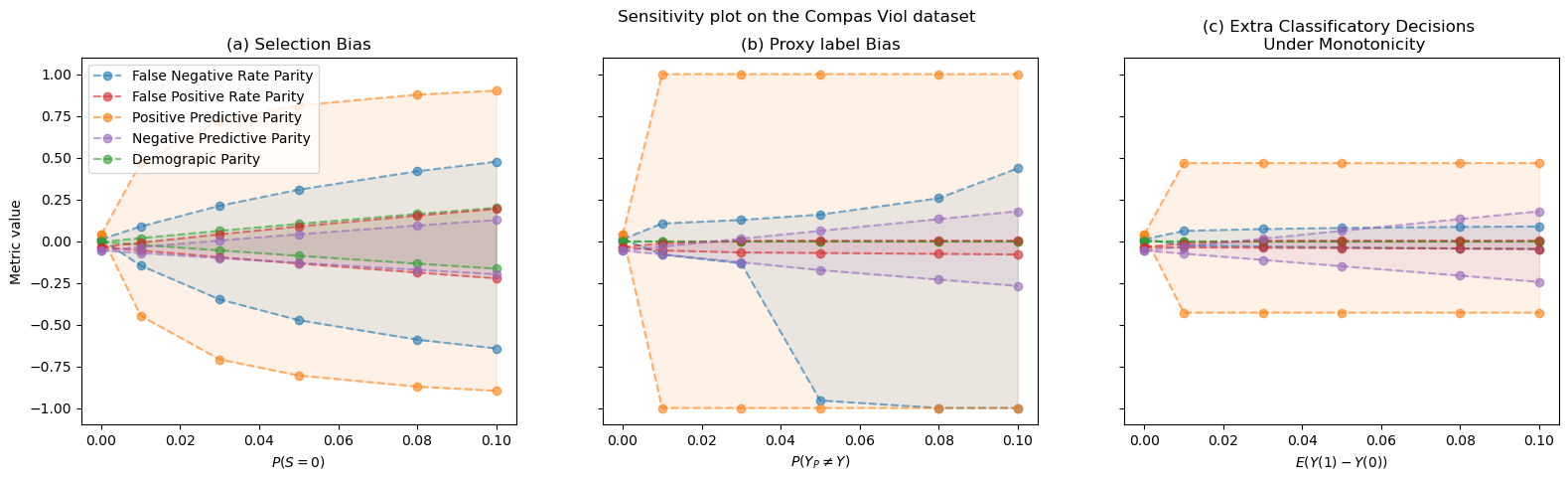}
\includegraphics[width = \textwidth]{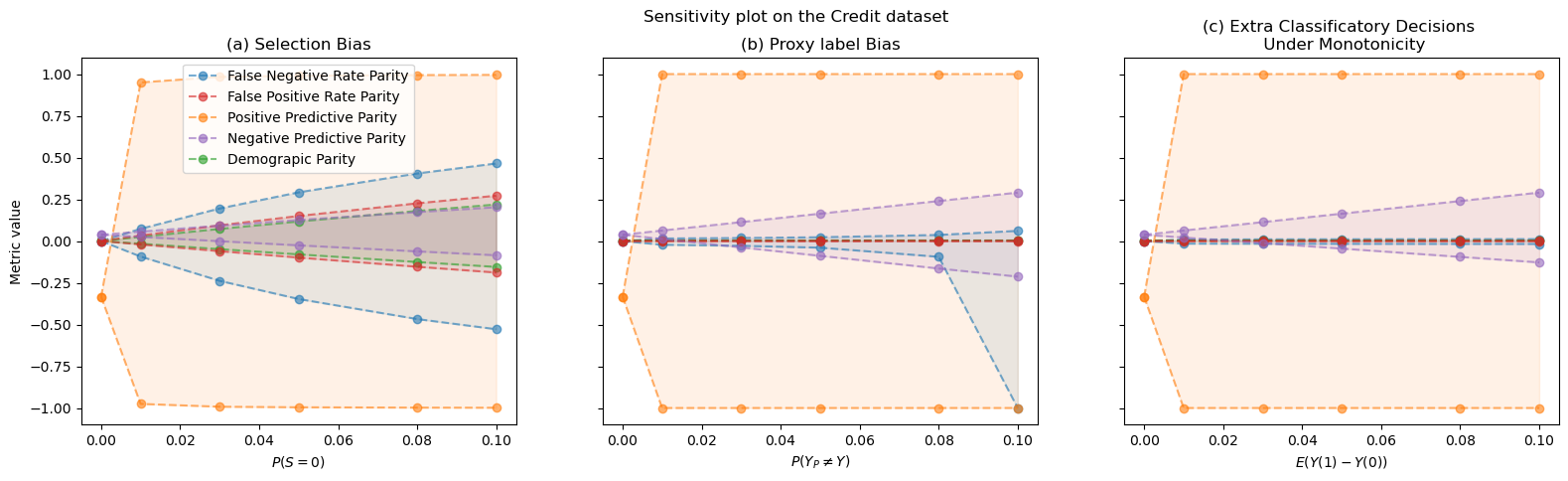}
\includegraphics[width = \textwidth]{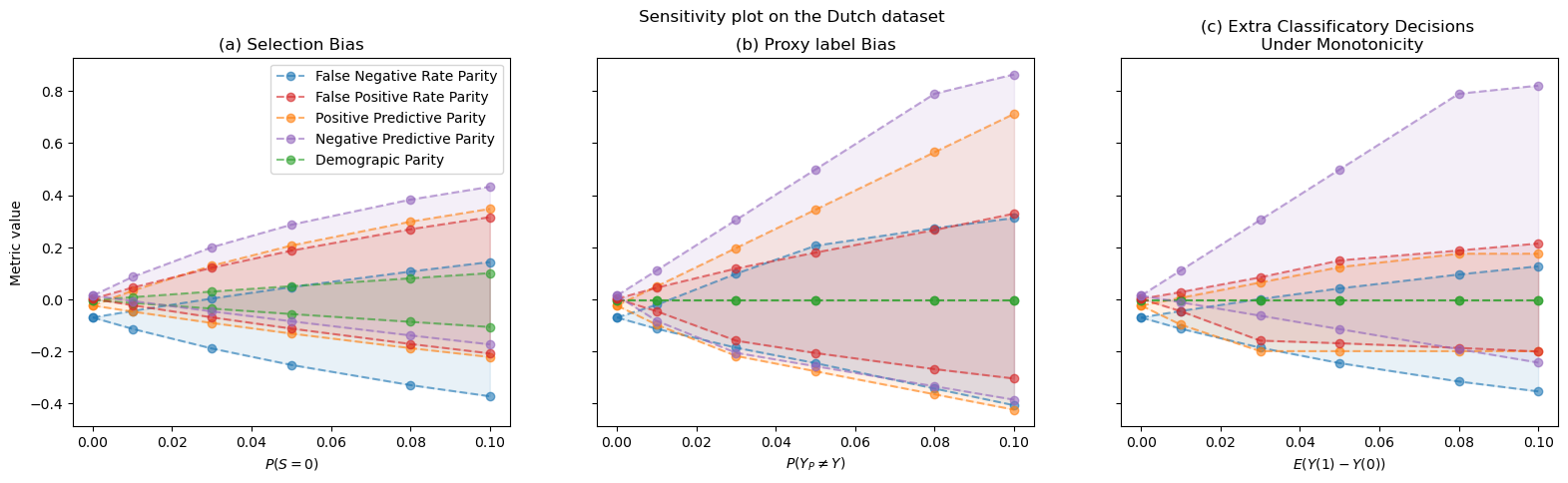}
\includegraphics[width = \textwidth]{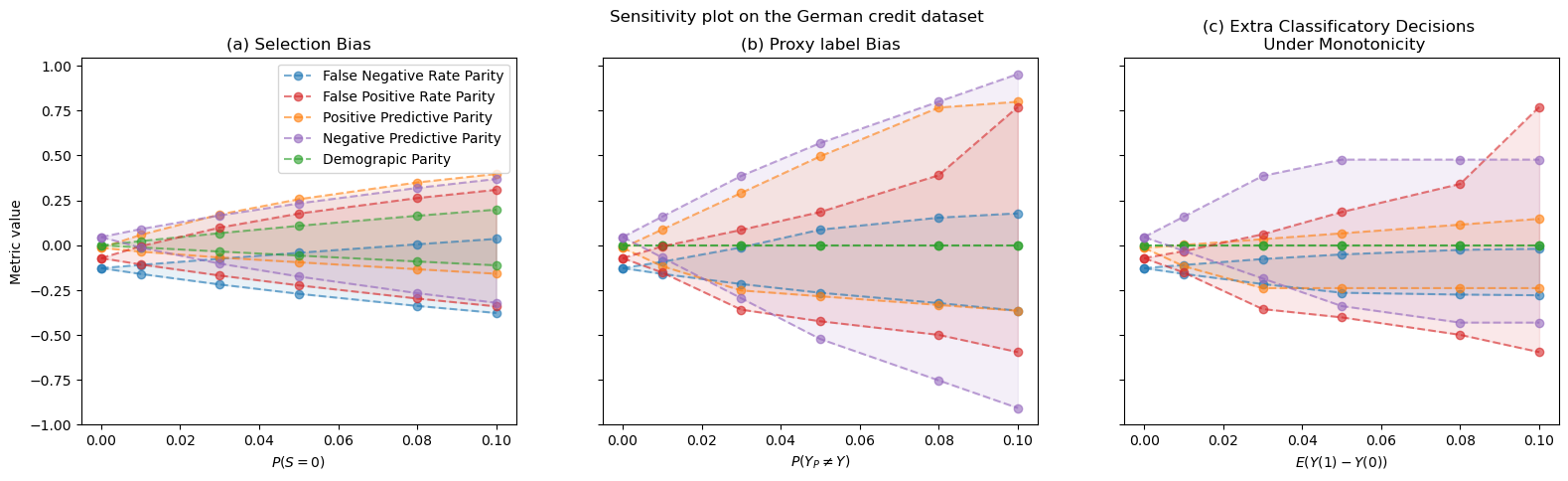}

    \label{fig:cross_dataset_2}
\end{figure}

\pagebreak

\begin{figure}[H]
    \centering
\includegraphics[width = \textwidth]{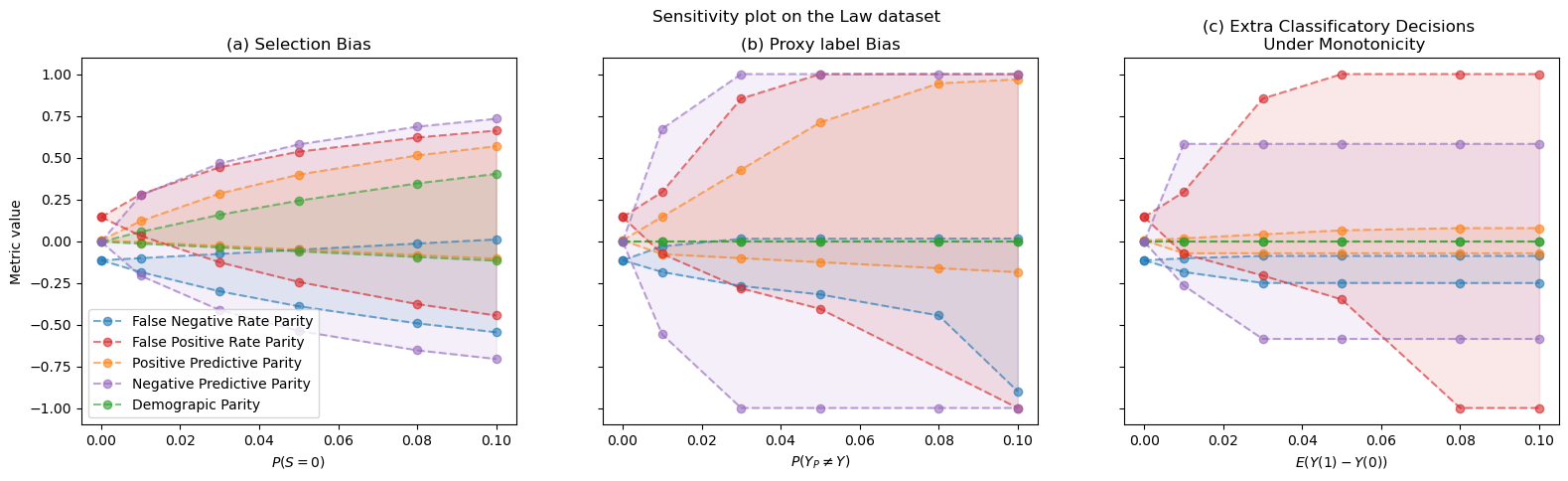}
\includegraphics[width = \textwidth]{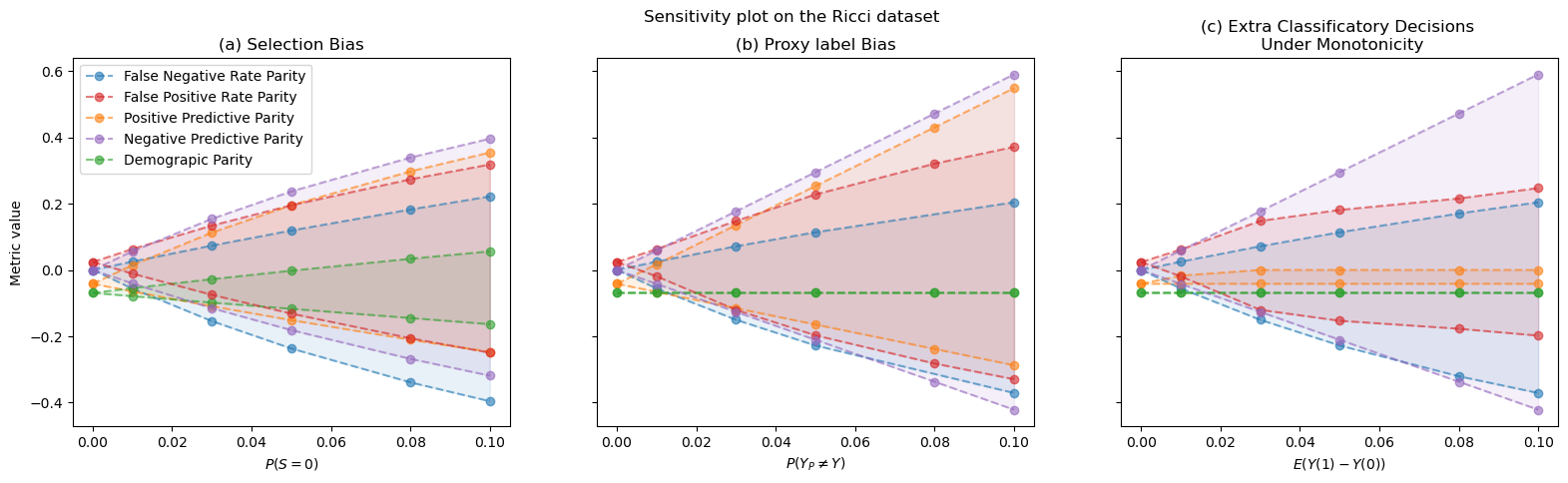}
\includegraphics[width = \textwidth]{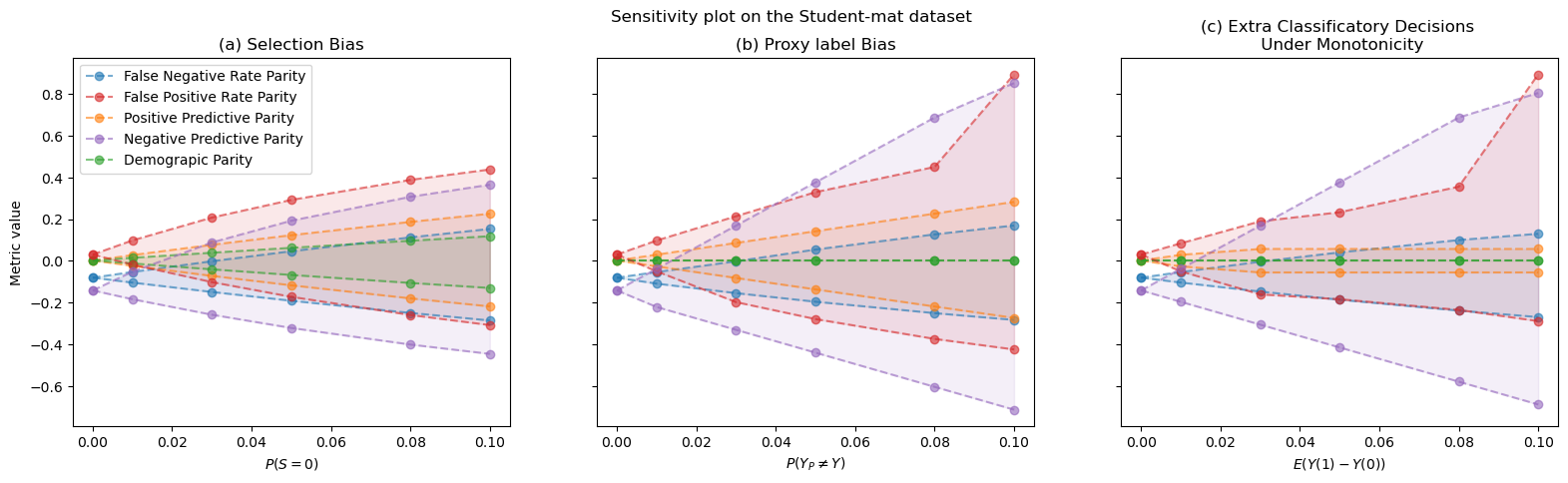}
\includegraphics[width = \textwidth]{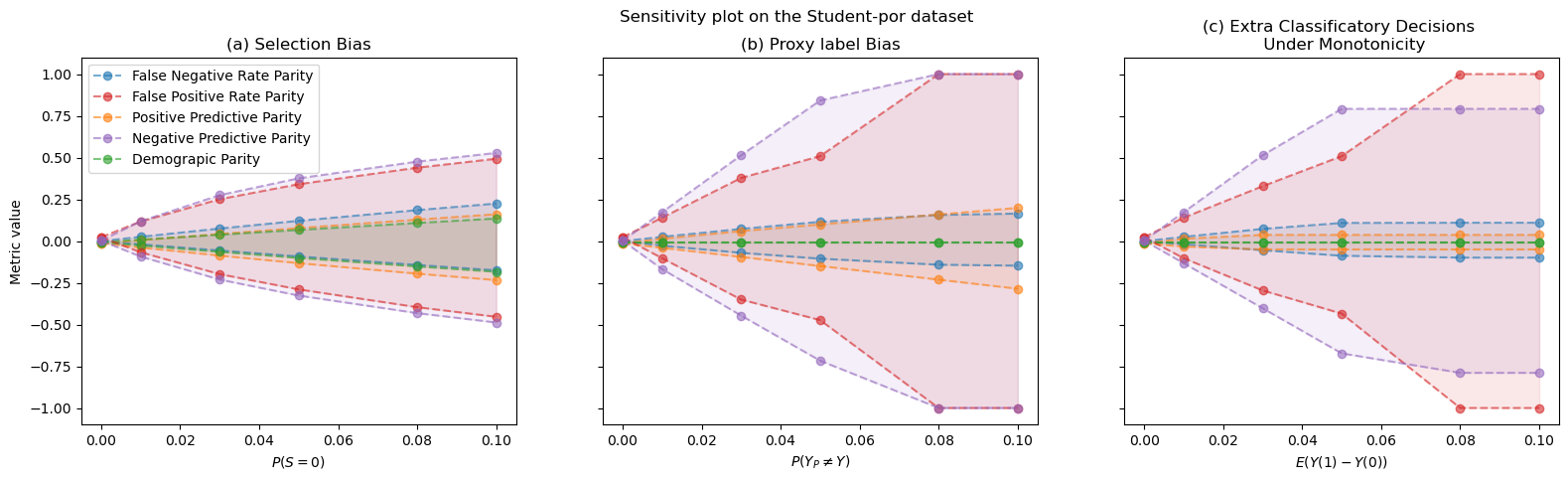}
    \label{fig:cross_dataset_3}
\end{figure}

\pagebreak
\section{Codebase and web interface}\label{ap:code}

We have developed both a \href{https://anonymous.4open.science/r/fragility_fairness/README.md}{codebase} and a \href{https://fragile.ml}{web interface} to ensure our framework is as usable as possible. The core of both tools are our bias configs, which allow for portability, modularity, and reproducibility.

The bias config file is a JSON file that specifies the DAG, constraints, and other parameters for the analysis. For example, the selection bias config specifies the following:

\begin{lstlisting}[language=json]
{
    "dag_str": "A->Y, A->P, A->S, U->P, U->Y, U->S, Y->S",
    "unob": ["U"],
    "cond_nodes": ["S"],
    "attribute_node": "A",
    "outcome_node": "Y",
    "prediction_node": "P",
    "constraints": ["P(S = 1) >= 1 - D"]
}
\end{lstlisting}

The \texttt{dag\_str} field is an edgelist for the DAG. The \texttt{unob} field specifies the unobserved variables in the DAG, which are used to compute the conditional independencies between the observable variables. The \texttt{cond\_nodes} field specifies the variables that are observed conditional on some value, e.g., in the selection case we only observe individuals conditional on $S=1$.

The \texttt{attribute\_node}, \texttt{outcome\_node}, and \texttt{prediction\_node} fields specify the nodes for the observed attribute, outcome, and prediction, which will be used in the parity metric. Usually, these will be set to "A", "Y", and "P" respectively, but some biases may require different nodes. For example, in \textit{Proxy Y Bias}, the observed attribute $Y$ is not the true outcome but a proxy for the true outcome, so the \texttt{outcome\_node} field would be set to the true outcome node. In the \textit{Proxy Y }config, we call this true outcome $Z$.

The \texttt{constraints} field specifies a set of probabilistic constraints on the variables in the DAG. The variable $D$ is a protected variable name that is used to specify the sensitivity parameter, which is a bias-specific level of measurement bias. In the selection bias, we have $P(S = 1) \geq 1 - D$, so $D$ is the probability of observing an individual. In Proxy Y Bias, we have $P(Z = 0 \;\&\; Y = 0) + P(Z = 1 \;\&\; Y = 1) \geq 1 - D$, so $D$ lower bounds the probability that the observed (proxy) outcome equals the true outcome.

Our codebase is essentially a parser for these configs along with a set of fairness metrics we have implemented. Biases and metrics are designed to allow combinations to suit any particular use case. The codebase wraps around these biases and fairness metrics and parses them into optimization problems which can be solved to produce bounds, currently using the autobounds backend. 

The website offers a user interface for constructing and editing bias configs. Configs can be loaded or exported, and every element of the config can be editted via the interface. Users can upload their own datasets and analyze the sensitivity of their dataset to their chosen fairness metric/bias combinations.

\section{Impact Statement}

This work aims to broaden the discussion of measurement biases in FairML and provide practical tools for practitioners in the area to use. Our hope is that any potential societal consequences of the work will be positive, corresponding to more equitable algorithmic decision making.
\end{document}